\documentclass[preprint,12pt,number,sort&compress]{elsarticle}
\usepackage{amssymb}
\usepackage{rotating}
\usepackage{amsmath,amsfonts}
\usepackage{amsthm}
\theoremstyle{definition}
\newtheorem{definition}{Definition}[section] 

\theoremstyle{plain}
\newtheorem{assumption}{Assumption}[section] 

\theoremstyle{plain}
\newtheorem{proposition}{Proposition}[section] 
 
\newtheorem{lemma}[proposition]{Lemma}
\newtheorem{corollary}[proposition]{Corollary}

\theoremstyle{remark}
\newtheorem{remark}{Remark}[section]
\usepackage{subcaption}
\usepackage{xcolor}
\usepackage{hyperref}
\usepackage{graphicx}
\usepackage{changepage}
\usepackage{comment}
\usepackage{tabularx}
\usepackage{booktabs}
\usepackage{float}
\begin{document}
	
\begin{frontmatter}
\title{A Unified Framework for Uncertainty-Aware Explainable Artificial Intelligence: A Case Study in Power Quality Disturbance Classification} %% Article title

\author{Yinsong Chen}
\ead{yinsong.chen@deakin.edu.au}
\affiliation{organization={School of Engineering, Deakin University},%Department and Organization
	city={Melbourne},
	postcode={3216}, 
	state={VIC},
	country={Australia}}

\author{Samson S. Yu\corref{1}}
\ead{samson.yu@deakin.edu.au}

\affiliation{organization={School of Engineering, Deakin University},%Department and Organization 
	city={Melbourne},
	postcode={3216}, 
	state={VIC},
	country={Australia}}
	
\author{Zhong Li} %% Author name
\ead{zhong.li@fernuni-hagen.de}

%% Author affiliation
\affiliation{organization={Faculty of Mathematics and Computer Science, FernUniversität in Hagen},%Department and Organization
	postcode={58084}, 
	country={Germany}}

\author{Chee Peng Lim} %% Author name
\ead{cplim@swin.edu.au}

%% Author affiliation
\affiliation{organization={Department of Computing Technologies, Swinburne University of Technology},%Department and Organization 
	city={Hawthorn},
	postcode={3122}, 
	state={VIC},
	country={Australia}}

\cortext[1]{Corresponding author}
%% Abstract
\begin{abstract}
Post-hoc explainable AI (XAI) methods usually return one attribution map, even when the model represents uncertainty in its parameters. We define the \emph{explanation distribution} as the distribution of attribution maps obtained from sampled models. The uncertainty-aware relevance attribution operator (UA-RAO) summarises this distribution using the mean, dispersion, quantiles, and agreement sets. The theory separates posterior-approximation error from finite-sample error and accounts for changes across activation boundaries and for stochastic explainers. On a 15-class power-quality-disturbance benchmark, the mean occlusion explanation from a deep ensemble aligns better with known disturbance regions than the deterministic baseline, although the improvement depends on the disturbance type. Tests with controlled input distortions show that additive noise changes the explanations more than amplitude scaling or aligned temporal shifts.
\end{abstract}

%% Keywords
\begin{keyword}

Bayesian neural network, explainable artificial intelligence, uncertainty quantification, power quality disturbances 

\end{keyword}
\end{frontmatter}

\section{Introduction}

Deep learning has improved classification performance in safety-critical domains, such as medical imaging, fault diagnosis, and energy systems, by learning rich representations directly from raw data \cite{ali2023explainable,samek2019explainable}. Yet the internal decision logic of deep neural networks (DNNs) remains opaque to end users, creating what is commonly termed the \emph{black-box problem} \cite{ali2023explainable}. This opacity limits deployment in safety-critical settings where predictions must be audited, contested, or verified against domain knowledge. Regulatory frameworks such as the European Union's General Data Protection Regulation \cite{gdpr2016} further require that automated decisions be interpretable, intensifying demand for principled transparency mechanisms. Critically, such mechanisms must also account for the uncertainty inherent in model predictions: an explanation that is itself unreliable offers limited assurance to practitioners who must act on it.

Explainable artificial intelligence (XAI) addresses this by producing post-hoc attribution maps that identify which input features drive a model's prediction \cite{ding2022explainability,raza2025translime}. Methods in this class include occlusion sensitivity, Gradient-weighted Class Activation Mapping (Grad-CAM), Local Interpretable Model-agnostic Explanations (LIME), Layer-wise Relevance Propagation (LRP), and SHapley Additive exPlanations (SHAP) \cite{samek2019explainable}. These methods are now well taxonomised \cite{ali2023explainable,nauta2023anecdotal} and widely deployed across scientific and engineering domains. However, virtually all are deterministic: they return a single explanation for a fixed model and fixed input. This design implicitly treats the model as a point estimate, an assumption that is invalid when the model represents inherent predictive uncertainty.

Bayesian neural networks (BNNs) address prediction uncertainty at its source by maintaining a distribution over model parameters, enabling principled separation of epistemic and aleatoric uncertainty \cite{chen5100162interplay,luo2025complexity,yang2025enhancing}. This distributional view is particularly valuable in the presence of noisy inputs, distribution shift, or ambiguous composite patterns. This introduces a new challenge for XAI: each posterior sample induces a different attribution map, making the explanation itself a random variable. Applying a deterministic XAI method to a single posterior sample, or to a point estimate of a BNN, discards this variability entirely, potentially reporting high-confidence explanations for fundamentally uncertain predictions \cite{bykov2020much,mulye2025uncertainty}. Despite their joint relevance to trustworthy machine learning, no existing framework systematically characterises the explanation itself as a statistical object whose distribution can be formally defined and bounded \cite{wood2024model}.

Several targeted approaches have begun to address this gap. Some works adapt a specific XAI operator to the Bayesian setting: Bykov et al.\ \cite{bykov2020much} extended LRP to BNNs (B-LRP) to propagate posterior uncertainty into saliency maps, Peltola \cite{peltola2018local} adapted LIME to Bayesian predictive distributions via Kullback--Leibler projections, and Wood et al.\ \cite{wood2024model} introduced entropy-based permutation importance measures for uncertainty-aware models. Others treat the explanation as a distributional object: Bykov et al.\ \cite{bykov2021explaining} formalised the notion of an explanation distribution induced by BNN posterior sampling, Clare et al.\ \cite{clare2022explainable} showed that distributional attribution substantially improves trustworthiness in geophysical prediction, and Mulye and Valdenegro-Toro \cite{mulye2025uncertainty} demonstrated that gradient-based explanations exhibit substantial variance across ensemble members, undermining reliance on point-estimate attributions. Each of these contributions, however, addresses a specific method--model pair or application domain. Recent work has also reduced the cost of Bayesian uncertainty estimation at deployment. Linearised Laplace approximations \cite{immer2021improving,daxberger2021laplace} produce closed-form Gaussian predictive distributions from one trained network, while sparse variational Gaussian-process posteriors \cite{titsias2009variational} represent function-space uncertainty using a small set of inducing variables. These methods reduce the cost of posterior sampling and are therefore useful for real-time monitoring in settings such as power grids. However, they do not specify how parameter or function uncertainty should be carried into post-hoc explanations. Our framework can use any approximation that defines a probability measure over model parameters, including these efficient posterior approximations. The question this paper addresses is therefore: given an arbitrary BNN posterior approximation and an arbitrary attribution operator, how can one formally define, characterise, and summarise the resulting distribution of explanations within a single probabilistic framework?

Power quality disturbance (PQD) classification provides a compelling case study for such a framework. PQD events (e.g., sags, swells, harmonics, flicker, and transients) arise from nonlinear loads, power-electronic devices, and growing renewable integration \cite{khan2023xpqrs}, and can trigger failures in sensitive equipment and undermine grid reliability. Deep convolutional neural networks (CNNs) now dominate this task, with accuracy exceeding 98\% on standard benchmarks \cite{wang2019novel}, trained on large synthetic datasets spanning 15 disturbance categories \cite{machlev2021open}. However, safety-critical operation demands more than accuracy: predictions must carry calibrated confidence, particularly for composite or out-of-distribution disturbances, and explanations must be verifiable against known physical signal characteristics \cite{machlev2021measuring}. The combination of a well-defined 15-class taxonomy, established synthetic benchmarks, and the simultaneous need for reliable uncertainty and interpretable attribution makes PQD classification an ideal validation domain.

This paper presents a unified framework for representing and summarising uncertainty in post-hoc explanations. The framework is evaluated on the 15-class Expert Power Quality Recognition System (XPQRS) benchmark \cite{khan2023xpqrs} using three model-distribution methods and three attribution operators. The main contributions are:

\begin{itemize}
    \item A measure-theoretic framework that defines the explanation distribution by applying any measurable deterministic or stochastic attribution operator to a model distribution, without modifying either component.
    \item The uncertainty-aware relevance attribution operator (UA-RAO), a general operator family converting an explanation distribution into interpretable uncertainty-aware attribution maps via multiple summary statistics, including mean, variance, coefficient of variation, quantiles, and agreement sets.
    \item Finite-sample and asymptotic results that separate explanation-space approximation bias from Monte Carlo error. Further results cover boundary crossings, model and explainer variability, aligned input sensitivity, and explanation-band coverage under exchangeability.
    \item An evaluation on a 15-class synthetic PQD benchmark in which deep-ensemble mean occlusion improves localisation in paired comparisons, although the improvement varies by class. The experiments also report input-sensitivity and finite-sample self-coverage results.
\end{itemize}

The remainder of this paper is organised as follows. Section~\ref{sec:pre} introduces the necessary background, covering the PQD classification problem, Bayesian neural network formulations, and the XAI operators considered. Section~\ref{sec:theo} presents the unified framework, formalising explanation distributions as push-forward measures and introducing the UA-RAO together with its theoretical guarantees. Section~\ref{sec:eval} describes the evaluation framework, including ground-truth disturbance segmentation, point-explanation localisation metrics, and distributional assessment criteria. Section~\ref{sec:experiments} reports experimental results on the 15-class synthetic PQD benchmark and includes a qualitative assessment on real-world measured data. Section~\ref{sec:conclusion} concludes the paper.

\section{Preliminaries}
\label{sec:pre}
This section establishes the prerequisites for the framework in Section~\ref{sec:theo}. The three subsections introduce, respectively, the PQD classification setting, the Bayesian neural network formulation with approximate inference, and the local relevance attribution operator—the three components that the framework combines.
\subsection{Power Quality Disturbance Classification}
\label{sec:pqd}
Let $x[n] \in \mathbb{R}$ denote a discrete-time voltage waveform of length $N$ ($n=1,\dots,N$) sampled from a power network operating at nominal frequency. The undisturbed reference signal is modelled as a pure sinusoid,
\begin{equation}
	\label{eq:1}
	x_0[n] = A\sin(\omega n+\phi), n=1,\dots,N,
\end{equation}
where $A$, $\omega$, and $\phi$ represent the amplitude, angular frequency, and phase, respectively. A \emph{simple disturbance} refers to a local deviation of $x$ from $x_0$ attributable to a single physical event (e.g., sag, transient). A \emph{complex disturbance} arises from the superposition or concatenation of $k>1$ simple events. In this work, 15 representative PQD categories are considered, covering both simple and complex disturbance types, as listed in \cite{wang2019novel}.

\noindent\textbf{Dataset Formulation.} Each raw waveform $x[n]$ with fixed length $N$ is treated as a univariate time series. Let $\mathcal{X} \subseteq \mathbb{R}^{N}$ denote the input space and $\mathcal{Y} = \{1,2,\dots,M\}$ the label space. A labelled PQD dataset can be represented by \begin{equation}
\mathcal{D} = \bigl\{(x^{(i)},\, y^{(i)})\bigr\}_{i=1}^{N_{\mathcal{D}}}, \quad x^{(i)} \in \mathcal{X},\; y^{(i)} \in \mathcal{Y},\end{equation} where $N_{\mathcal{D}}$ is the number of samples in the dataset.

\noindent\textbf{Deep Learning Classifier.} The PQD classifier is a parametric function
\begin{equation}
	\label{eq:classifier}
	f_{\theta}: \mathcal{X} \rightarrow \Delta^{M-1},
\end{equation}
where $\Delta^{M-1} = \{p \in \mathbb{R}^{M} : p_m \geq 0,\; \sum_m p_m = 1\}$ is the $(M{-}1)$-simplex. The output $f_{\theta}(x) \in \Delta^{M-1}$ is a probability vector over the $M$ classes, and the predicted label is $\hat{y} = \arg\max_{m}\, [f_{\theta}(x)]_m$. The parameters $\theta$ are estimated by minimising the categorical cross-entropy loss over $\mathcal{D}$,
\begin{equation}
	\label{eq:loss}
	\mathcal{L}(\theta) = -\frac{1}{|\mathcal{D}|}
	\sum_{(x,y) \in \mathcal{D}}
	\sum_{m=1}^{M} \mathbf{1}[y = m]\,\log [f_{\theta}(x)]_m,
\end{equation}
with $\mathbf{1}[\cdot]$ the indicator function. A one-dimensional convolutional neural network (1D-CNN) is adopted as the backbone, as its hierarchical feature extraction is well suited to the time-localised and morphologically distinct signatures of PQDs.
\subsection{Bayesian Neural Network}
Deterministic classifiers $f_{\theta}$ provide point predictions without quantifying predictive confidence. In safety-critical applications such as power quality monitoring, distinguishing reliable predictions from uncertain ones is essential. To this end, the classifier is formulated as a BNN by imposing a prior distribution $p(\theta)$ over the model parameters. Let $\Theta$ denote the parameter space, with $\theta\in\Theta$. Given the training set $\mathcal{D}$, Bayesian inference yields the posterior
\begin{equation}
	p(\theta \mid \mathcal{D})
	= \frac{p(\mathcal{D} \mid \theta)\,p(\theta)}
	{p(\mathcal{D})},
\end{equation} where \begin{equation}
	p(\mathcal{D} \mid \theta) = \prod_{(x,y) \in \mathcal{D}} p(y \mid x, \theta)
\end{equation} is the discriminative likelihood, and 
\begin{equation}
	p(\mathcal{D}) = \int_{\Theta} p(\mathcal{D}\mid\theta)\,p(\theta)\,\mathrm{d}\theta
\end{equation} is the evidence. For a new input $x$, the posterior predictive distribution, which captures epistemic uncertainty, is obtained by marginalising over the parameter posterior,
\begin{equation}
	p(y \mid x, \mathcal{D})
	= \int_{\Theta}
	p(y \mid x, \theta)\,
	p(\theta \mid \mathcal{D})\,
	\mathrm{d}\theta.
\end{equation}
\noindent\textbf{Approximate Posterior Inference} Since the exact posterior $p(\theta \mid \mathcal{D})$ is generally intractable for DNNs, an approximate posterior $q_{\phi}(\theta)$, parameterised by $\phi$, is introduced \cite{chen5100162interplay} such that
\begin{equation}
	q_{\phi}(\theta) \approx p(\theta|\mathcal{D}).
\end{equation}
The predictive distribution is correspondingly approximated as
\begin{equation}
	p(y|x,\mathcal{D}) \approx \int_{\Theta} p(y|x,\theta)q_{\phi}(\theta)d\theta. 
\end{equation}
Since this integral is generally intractable, it is estimated using Monte Carlo integration. Given $S$ samples $\theta^{(s)} \sim q_{\phi}(\theta)$, $s=1,...,S$,
\begin{equation}
	p(y|x,\mathcal{D}) \approx \frac{1}{S} \sum_{s=1}^S p(y|x,\theta^{(s)}).
\end{equation} By the law of large numbers, this estimator converges asymptotically to the predictive distribution,
\begin{equation}
	\lim_{S\rightarrow\infty} \frac{1}{S} \sum_{s=1}^S p(y|x,\theta^{(s)}) = p(y|x, \mathcal{D}).
\end{equation}
This Monte Carlo estimator is the computational basis on which the explanation distribution in Section~\ref{sec:theo} is made tractable.

\begin{remark}[Implicit Model Distribution of Deep Ensembles]
\label{rem:ensemble}
Deep ensembles do not specify an explicit posterior density. To place them within the proposed framework, let $Z=(\theta_0,U)\sim\mu_Z$ denote the random training state on $\mathcal Z$, including the initialisation and all stochastic optimisation choices. For a measurable training map $T_{\mathcal D}:\mathcal Z\to\Theta$, randomised training induces the model distribution
\begin{equation}
 q_\phi^{\mathrm{ens}}=(T_{\mathcal D})_\#\mu_Z.
\end{equation}
This distribution need not have a tractable density. If $Z_j$ are independent draws from $\mu_Z$, then $\theta_j=T_{\mathcal D}(Z_j)$ are i.i.d. and therefore exchangeable. For a finite ensemble, exchangeability remains a modelling assumption; fixed seed values alone do not guarantee it. Wilson and Izmailov \cite{wilson2020bayesian} discuss the multi-basin Bayesian-model-averaging view, while Izmailov et al.\ \cite{izmailov2021posteriors} compare deep ensembles with exhaustive Hamiltonian Monte Carlo (HMC) using predictive agreement and total variation.
\end{remark}
\subsection{Explainable Artificial Intelligence}
\label{sec:xai}
XAI methods are generally divided into \textit{global} and \textit{local} approaches. \textit{Global} methods characterise model behaviour over entire datasets, whereas \textit{local} methods attribute the relevance of individual predictions to specific input features through a relevance attribution operation \cite{bykov2021explaining}. This study focuses exclusively on \textit{local} XAI methods, which are well suited to PQD classification because they provide instance-specific interpretations essential for validating individual decisions in safety-critical power systems.

\noindent\textbf{Relevance Attribution Operator.} Let $f_{\theta}$ denote the classifier under parameter realisation $\theta$. A \textit{local} relevance attribution operator is defined as 
\begin{equation}
	\tau_x(\theta):=\mathcal{T}_{x,\theta}[f_{\theta}](x) = R_{\theta}(x),
\end{equation}
where $R_{\theta}(x)\in\mathbb{R}^{N}$ is a time-aligned saliency signal. The component $R_{\theta}(x)[n]$ quantifies the influence of $x[n]$ on the classifier decision under parameter realisation $\theta$. For an attribution operator with internal randomness, such as LIME perturbation sampling, write $\tau_x(\theta,\xi)$, where $\xi\in\Xi$ has distribution $P_\xi$ and is independent of $\theta$. The pair $(\theta,\xi)$ then replaces $\theta$ in the definitions below, and $q_\phi$ is replaced by the product measure $q_\phi\otimes P_\xi$ on $\Theta\times\Xi$; see \textbf{Remark~\ref{rem:lip_ops}}.

\section{A Unified Framework for Uncertainty-Aware Explainable Artificial Intelligence}
\label{sec:theo}
Section~\ref{sec:pre} established the framework prerequisites: the approximate posterior $q_\phi$ over network parameters, and the local relevance attribution map $\tau_x$ introduced in Section~\ref{sec:xai}. This section formalises the distribution these objects jointly induce over explanations, establishes its theoretical properties, and introduces the UA-RAO as a general summarisation operator.
\begingroup

\subsection{Explanation Distributions}

For a fixed input $x$, write $\tau_x(\theta)=\mathcal{T}_{x,\theta}[f_\theta](x)$. When the explainer has an internal random state $\Xi\sim P_\xi$, independent of $\Theta$, write $\tau_x(\theta,\xi)$ and use the product measure $q_\phi\otimes P_\xi$.

\begin{assumption}[Measurable attribution operator]
\label{ass:1}
For each fixed $x$, $\tau_x:\Theta\to\mathbb{R}^N$ is Borel measurable. For a stochastic explainer, $\tau_x:\Theta\times\Xi\to\mathbb{R}^N$ is jointly Borel measurable.
\end{assumption}

\begin{definition}[Explanation distribution]
\label{def:1}
The approximate explanation distribution at $x$ is the push-forward
\begin{equation}
 Q_x:=p_{q_\phi}(\cdot\mid x,\mathcal D)=(\tau_x)_\#q_\phi,
 \qquad
 Q_x(A)=q_\phi\!\left(\tau_x^{-1}(A)\right),
 \label{eq:12}
\end{equation}
for $A\in\mathcal B(\mathbb R^N)$, where $\mathcal B(\mathbb R^N)$ is the Borel $\sigma$-algebra. For a stochastic explainer, replace $q_\phi$ by $q_\phi\otimes P_\xi$. The corresponding ideal explanation distribution is
\[
 P_x=(\tau_x)_\#p(\theta\mid\mathcal D),
\]
with the same $P_\xi$ in the stochastic-explainer case.
\end{definition}

This definition applies to probability measures and does not require an explanation density. For fixed $x$, a finite ReLU network has finitely many Borel activation-pattern regions. Occlusion and Grad-CAM are measurable on these regions. For LIME, we assume that the implemented map is jointly measurable in the model parameters and sampled random state. Continuity of $f_\theta$ does not by itself imply continuity of a gradient- or optimisation-based explainer.

\begin{proposition}[Monte Carlo estimation]
\label{prop:mc}
Let Assumption~\ref{ass:1} hold, let $\gamma:\mathbb R^N\to\mathbb R^d$ be measurable, and suppose $\mathbb E_{Q_x}\|\gamma(R)\|_2<\infty$. If $R^{(1)},\ldots,R^{(S)}$ are i.i.d. draws from $Q_x$, then
\begin{equation}
 \frac1S\sum_{s=1}^S\gamma(R^{(s)})
 \xrightarrow{\mathrm{a.s.}}\mathbb E_{Q_x}[\gamma(R)].
 \label{eq:mc}
\end{equation}
For a stochastic explainer, the draws are generated from $q_\phi\otimes P_\xi$.
\end{proposition}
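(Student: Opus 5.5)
The plan is to reduce the statement to Kolmogorov's strong law of large numbers, applied coordinatewise in $\mathbb R^d$. First I would check that each $\gamma(R^{(s)})$ is a genuine random vector. By Assumption~\ref{ass:1}, $\tau_x$ is Borel measurable, and $\gamma$ is measurable by hypothesis. Hence the composition $\gamma\circ\tau_x:\Theta\to\mathbb R^d$ is measurable. When the draws are produced as $R^{(s)}=\tau_x(\theta^{(s)})$ with $\theta^{(s)}\sim q_\phi$ i.i.d., the vectors $Y_s:=\gamma(R^{(s)})=(\gamma\circ\tau_x)(\theta^{(s)})$ are i.i.d., because measurable functions of independent random elements are independent. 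By Definition~\ref{def:1}, each $R^{(s)}$ has law $Q_x=(\tau_x)_\#q_\phi$. The change-of-variables formula for push-forward measures then gives
\[
\mathbb E[\gamma(R^{(s)})]=\int_{\Theta}\gamma(\tau_x(\theta))\,q_\phi(\mathrm d\theta)=\int_{\mathbb R^N}\gamma(r)\,Q_x(\mathrm dr)=\mathbb E_{Q_x}[\gamma(R)].
\]
This identifies the target limit with the common mean of the $Y_s$. If the draws are simply given as i.i.d. from $Q_x$, this step is immediate.

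Second, I would verify integrability componentwise. For each $j=1,\dots,d$ we have $|\gamma_j(r)|\le\|\gamma(r)\|_2$, so $\mathbb E_{Q_x}|\gamma_j(R)|\le\mathbb E_{Q_x}\|\gamma(R)\|_2<\infty$. The strong law therefore applies to each scalar sequence $(\gamma_j(R^{(s)}))_{s\ge1}$. For each $j$ this yields an event $\Omega_j$ of probability one on which $\frac1S\sum_s\gamma_j(R^{(s)})\to\mathbb E_{Q_x}[\gamma_j(R)]$. On the finite intersection $\bigcap_{j=1}^d\Omega_j$, which still has probability one, all coordinates converge at once. Convergence in every coordinate is equivalent to convergence in $\|\cdot\|_2$ on $\mathbb R^d$, and this establishes \eqref{eq:mc}.

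Third, the stochastic-explainer case follows by relabelling. I would replace $\Theta$ by $\Theta\times\Xi$ and $q_\phi$ by $q_\phi\otimes P_\xi$. Joint measurability of $\tau_x$ (Assumption~\ref{ass:1}) makes $\gamma\circ\tau_x$ measurable on the product space. Drawing the pairs $(\theta^{(s)},\xi^{(s)})$ i.i.d. from the product measure makes the $Y_s$ i.i.d. with mean $\mathbb E_{Q_x}[\gamma(R)]$, where now $Q_x=(\tau_x)_\#(q_\phi\otimes P_\xi)$. The same argument then applies verbatim.

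I do not expect a real obstacle. The only point needing care is measurability, which guarantees that $\gamma(R^{(s)})$ is a random variable and that the push-forward change of variables is valid. Assumption~\ref{ass:1} and the measurability of $\gamma$ are exactly what supply this. No continuity of $\tau_x$ is needed, which is consistent with the remark that gradient- or optimisation-based explainers may be discontinuous across activation regions.
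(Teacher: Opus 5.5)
Your proposal is correct and follows the same route as the paper's proof. In both, measurability of $\gamma\circ\tau_x$ preserves i.i.d. sampling, the push-forward identity identifies the common mean with $\mathbb E_{Q_x}[\gamma(R)]$, and the strong law is applied in $\mathbb R^d$; your componentwise reduction, using $|\gamma_j|\le\|\gamma\|_2$ and a finite intersection of probability-one events, spells out the finite-dimensional vector strong law that the paper cites.
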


\begin{proof}
Measurable transformations preserve i.i.d. sampling, the push-forward identity gives $\mathbb E[\gamma(\tau_x(\Theta))]=\mathbb E_{Q_x}[\gamma(R)]$, and the finite-dimensional vector strong law applies \cite{billingsley2017probability,durrett2019probability}.
\end{proof}

\begin{assumption}[Bounded explanation range]
\label{ass:range}
For fixed $x$, $P_x$ and $Q_x$ are supported on a common measurable set $\mathcal R_x\subset\mathbb R^N$ with
\[
 D_R:=\sup_{r,r'\in\mathcal R_x}\|r-r'\|_2<\infty.
\]
\end{assumption}

This assumption provides the finite moments and bounded functional ranges used below. It holds automatically for occlusion maps based on bounded class-probability differences. For raw Grad-CAM, it must be assumed for explanations under both the ideal and approximate posteriors. It is separate from the min--max normalisation used only for visual display.

\begin{proposition}[Approximation bias and finite-sample error]
\label{prop:bias}
Suppose Assumption~\ref{ass:range} holds. Let $R^{(1)},\ldots,R^{(S)}\stackrel{\mathrm{i.i.d.}}{\sim}Q_x$, let $\gamma:\mathbb R^N\to\mathbb R$ be $1$-Lipschitz, and let $\delta\in(0,1)$. Then, with probability at least $1-\delta$,
\begin{equation}
 \left|\frac1S\sum_{s=1}^S\gamma(R^{(s)})-
 \mathbb E_{P_x}[\gamma(R)]\right|
 \le W_1(P_x,Q_x)+D_R\sqrt{\frac{\log(2/\delta)}{2S}}.
 \label{eq:bias}
\end{equation}
Here, $W_1$ denotes the $1$-Wasserstein distance on explanation space.
\end{proposition}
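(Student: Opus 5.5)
The plan is to split the error with the triangle inequality at the intermediate quantity $\mathbb E_{Q_x}[\gamma(R)]$:
\[
 \Bigl|\tfrac1S\textstyle\sum_s\gamma(R^{(s)})-\mathbb E_{P_x}[\gamma(R)]\Bigr|
 \le \Bigl|\tfrac1S\textstyle\sum_s\gamma(R^{(s)})-\mathbb E_{Q_x}[\gamma(R)]\Bigr|
 +\bigl|\mathbb E_{Q_x}[\gamma(R)]-\mathbb E_{P_x}[\gamma(R)]\bigr|.
\]
The second term is deterministic and is the approximation bias. The first term is the Monte Carlo error under the approximate explanation distribution. We bound the two terms separately.

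\textbf{Bias term.} By Assumption~\ref{ass:range}, both $P_x$ and $Q_x$ live on the bounded set $\mathcal R_x$, so they have finite first moments. A $1$-Lipschitz $\gamma$ is then integrable under both measures, because $|\gamma(r)|\le|\gamma(r_0)|+D_R$ on $\mathcal R_x$ for any fixed $r_0\in\mathcal R_x$. Hence $W_1(P_x,Q_x)$ is finite. Kantorovich--Rubinstein duality gives $W_1(P_x,Q_x)=\sup_{\mathrm{Lip}(g)\le1}|\mathbb E_{P_x}g-\mathbb E_{Q_x}g|$, so the second term is at most $W_1(P_x,Q_x)$.

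If one prefers to avoid duality, the coupling definition works directly. For any coupling $\pi$ of $(P_x,Q_x)$,
\[
 |\mathbb E_{P_x}\gamma-\mathbb E_{Q_x}\gamma|=|\mathbb E_\pi[\gamma(R)-\gamma(R')]|\le\mathbb E_\pi\|R-R'\|_2 .
\]
Taking the infimum over $\pi$ gives the same bound. This direction is elementary and only needs the inequality, not the full duality.

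\textbf{Finite-sample term.} The variables $Y_s=\gamma(R^{(s)})$ are i.i.d. because the $R^{(s)}$ are i.i.d. draws from $Q_x$; Proposition~\ref{prop:mc} records that measurable maps preserve this. The key point is that each $Y_s$ takes values in an interval of length at most $D_R$. To see this, note that almost surely $R^{(s)}\in\mathcal R_x$, so for any two points $r,r'\in\mathcal R_x$ we have $|\gamma(r)-\gamma(r')|\le\|r-r'\|_2\le D_R$. Therefore $\gamma(\mathcal R_x)$ lies in an interval $[a,a+D_R]$ with $a=\inf_{\mathcal R_x}\gamma$, which is finite.

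The two-sided Hoeffding inequality then gives
\[
 \Pr\bigl(|\bar Y-\mathbb E\bar Y|\ge t\bigr)\le 2\exp(-2St^2/D_R^2).
\]
Setting the right-hand side equal to $\delta$ yields $t=D_R\sqrt{\log(2/\delta)/(2S)}$. On the complementary event, which has probability at least $1-\delta$, combining this with the bias bound gives \eqref{eq:bias}.

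\textbf{Main obstacle.} The only delicate step is the range argument. We must be careful to use the support of $Q_x$, not all of $\mathbb R^N$, so that $\gamma(R^{(s)})$ is almost surely bounded. We must also handle the null set off $\mathcal R_x$, and check that $a$ is finite; this follows because $\mathcal R_x$ is nonempty and has bounded diameter. Measurability of $\gamma\circ\tau_x$ follows from Assumption~\ref{ass:1}, since Lipschitz functions are continuous. For a stochastic explainer, the same argument goes through verbatim with $q_\phi\otimes P_\xi$ in place of $q_\phi$.
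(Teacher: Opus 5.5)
Your proposal is correct and matches the paper's proof step for step. Both add and subtract $\mathbb E_{Q_x}[\gamma(R)]$, bound the bias term by $W_1(P_x,Q_x)$ through the dual form (your coupling inequality is just the easy direction of it), and apply Hoeffding's inequality using the fact that a $1$-Lipschitz $\gamma$ has range at most $D_R$ on $\mathcal R_x$. Your extra checks on finiteness of $\inf_{\mathcal R_x}\gamma$, integrability, and the null set off $\mathcal R_x$ fill in details the paper leaves implicit.
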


\begin{proof}
Add and subtract $\mathbb E_{Q_x}[\gamma(R)]$ and apply the triangle inequality. The dual form of $W_1$ bounds the approximation term by $W_1(P_x,Q_x)$ \cite{villani2003topics}. On $\mathcal R_x$, a $1$-Lipschitz scalar functional has range at most $D_R$, and Hoeffding's inequality gives the second term \cite{hoeffding1963probability}.
\end{proof}

\begin{corollary}[Simultaneous mean-map error]
\label{cor:meanmap}
Let $\bar R=S^{-1}\sum_{s=1}^S R^{(s)}$. If every explanation coordinate has range length at most $C_x$, then, with probability at least $1-\delta$,
\[
 \left\|\bar R-\mathbb E_{P_x}[R]\right\|_\infty
 \le W_1(P_x,Q_x)+C_x\sqrt{\frac{\log(2N/\delta)}{2S}}.
\]
For maps in $[0,1]^N$, one may take $C_x=1$.
\end{corollary}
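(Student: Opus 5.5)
The plan is to reduce the sup-norm bound to $N$ scalar applications of the argument behind Proposition~\ref{prop:bias}, and then combine them with a union bound over coordinates. For each $n\in\{1,\dots,N\}$, take the coordinate projection $\gamma_n(r)=r[n]$. It is $1$-Lipschitz with respect to $\|\cdot\|_2$, since $|r[n]-r'[n]|\le\|r-r'\|_2$. Writing $\bar R[n]-\mathbb E_{P_x}[R[n]]$ as
\[
\bigl(\bar R[n]-\mathbb E_{Q_x}[R[n]]\bigr)+\bigl(\mathbb E_{Q_x}[R[n]]-\mathbb E_{P_x}[R[n]]\bigr),
\]
we bound the second term by $W_1(P_x,Q_x)$ using the Kantorovich--Rubinstein dual form, exactly as in the proof of Proposition~\ref{prop:bias}. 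This term is deterministic and does not depend on $n$. The coordinate range hypothesis says $R[n]$ lies in an interval of length at most $C_x$ under both measures. In particular the first moments exist, and the supports are bounded in $\ell_2$ by $\sqrt N\,C_x$, so $W_1$ is finite.

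For the stochastic term, I would not use the generic constant $D_R$ from Proposition~\ref{prop:bias}. Instead I would apply Hoeffding's inequality directly to the i.i.d. bounded variables $R^{(s)}[n]$, each of range at most $C_x$. This gives
\[
\Pr\Bigl(\bigl|\bar R[n]-\mathbb E_{Q_x}[R[n]]\bigr|>t\Bigr)\le 2\exp(-2St^2/C_x^2).
\]
Setting the right-hand side equal to $\delta/N$ gives $t=C_x\sqrt{\log(2N/\delta)/(2S)}$. A union bound over the $N$ coordinates then shows that, with probability at least $1-\delta$, every coordinate deviation is at most $t$ simultaneously. Taking the maximum over $n$ and adding the deterministic bias bound yields the stated $\|\cdot\|_\infty$ inequality. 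For maps in $[0,1]^N$, each coordinate range is contained in $[0,1]$, so $C_x=1$.

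The main point needing care is that Hoeffding uses the coordinate range $C_x$ rather than $D_R$. Using Proposition~\ref{prop:bias} verbatim would only give the weaker constant $D_R$, which can be as large as $\sqrt N C_x$. So the proof re-runs the concentration step per coordinate, reusing only the bias part of that proposition. The other issue is that the bias term must not pick up a union-bound factor. It does not, because it is deterministic and the same bound $W_1(P_x,Q_x)$ holds for every $n$.
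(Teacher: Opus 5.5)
Your proposal is correct and follows the argument the paper leaves implicit. The paper's route is to apply the decomposition of Proposition~\ref{prop:bias} to each coordinate projection, bound the shared deterministic bias by $W_1(P_x,Q_x)$, and apply Hoeffding with the coordinate range $C_x$ in place of $D_R$ at level $\delta/N$, followed by a union bound over the $N$ coordinates. You also correctly note that Proposition~\ref{prop:bias} used verbatim would only give the constant $D_R$, which can be as large as $\sqrt{N}\,C_x$, so the concentration step has to be redone per coordinate.
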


The first term is the approximation bias in explanation space, and the second is the finite-sample Monte Carlo error. The bound applies to bounded $1$-Lipschitz scalar summaries and coordinatewise means. Variance, coefficient of variation, quantiles, and the localisation metrics in Section~\ref{sec:eval:point} require separate analysis. Direct estimation of $W_1(P_x,Q_x)$ also requires a suitable reference explanation distribution. The next lemma instead shows how local parameter changes and boundary crossings contribute to this distance.

\begin{lemma}[Local-stability bound with boundary-crossing penalty]
\label{lem:piecewise}
Let $G\subseteq\Theta\times\Theta$ be Borel and suppose directly that
\[
 \|\tau_x(\theta)-\tau_x(\theta')\|_2
 \le L_G\|\theta-\theta'\|_2,
 \qquad (\theta,\theta')\in G.
\]
Under Assumption~\ref{ass:range}, every coupling $\pi\in\Pi(p(\theta\mid\mathcal D),q_\phi)$ satisfies
\begin{equation}
 W_1(P_x,Q_x)\le
 L_G\int_G\|\theta-\theta'\|_2\,d\pi
 +D_R\,\pi(G^c).
 \label{eq:piecewise}
\end{equation}
Here, $\Pi(P,Q)$ denotes the set of couplings of probability measures $P$ and $Q$.
The bound may be tightened by minimising the right-hand side over these couplings.
\end{lemma}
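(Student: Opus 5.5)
The plan is to push the parameter coupling forward to explanation space and then split the transport cost along $G$ and $G^c$. Fix any $\pi\in\Pi(p(\theta\mid\mathcal D),q_\phi)$ and define the measurable map $\Phi(\theta,\theta')=(\tau_x(\theta),\tau_x(\theta'))$; measurability of $\Phi$ follows from Assumption~\ref{ass:1}. Set $\tilde\pi:=\Phi_\#\pi$. Its marginals are $(\tau_x)_\#p(\theta\mid\mathcal D)=P_x$ and $(\tau_x)_\#q_\phi=Q_x$ by Definition~\ref{def:1}. Hence $\tilde\pi\in\Pi(P_x,Q_x)$, and the primal (Kantorovich) form of $W_1$ gives
\[
 W_1(P_x,Q_x)\le\int\|r-r'\|_2\,d\tilde\pi(r,r')=\int_{\Theta\times\Theta}\|\tau_x(\theta)-\tau_x(\theta')\|_2\,d\pi(\theta,\theta').
\]
The equality is the change-of-variables formula for push-forwards. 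The integrand is nonnegative and measurable, so the integral is well defined, possibly $+\infty$ before the next step.

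Next, split the last integral over $G$ and $G^c$. Both pieces are Borel because $G$ is Borel. On $G$, the assumed local Lipschitz bound gives a contribution of at most $L_G\int_G\|\theta-\theta'\|_2\,d\pi$.

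On $G^c$, use Assumption~\ref{ass:range}. $P_x$ is supported on $\mathcal R_x$, so $p(\theta\mid\mathcal D)$-almost every $\theta$ has $\tau_x(\theta)\in\mathcal R_x$; likewise $q_\phi$-almost every $\theta'$ has $\tau_x(\theta')\in\mathcal R_x$. Since the first marginal of $\pi$ is $p(\theta\mid\mathcal D)$ and the second is $q_\phi$, for $\pi$-almost every pair $(\theta,\theta')$ both explanations lie in $\mathcal R_x$. For such pairs $\|\tau_x(\theta)-\tau_x(\theta')\|_2\le D_R$, so the $G^c$ contribution is at most $D_R\,\pi(G^c)$. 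Adding the two pieces yields \eqref{eq:piecewise}. The coupling $\pi$ was arbitrary, so taking the infimum over $\Pi(p(\theta\mid\mathcal D),q_\phi)$ gives the tightened statement.

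The only delicate step is this almost-sure support argument. A null set under the marginals has to translate into a $\pi$-null set of pairs. The translation holds because $\pi(\{\theta\notin A\}\times\Theta)=p(\theta\mid\mathcal D)(A^c)=0$ for $A=\tau_x^{-1}(\mathcal R_x)$, and similarly in the second coordinate. The bound on $G^c$ then holds off a $\pi$-null set.

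Stochastic explainers are handled by the substitution stated in Definition~\ref{def:1}: replace $\theta$ by $(\theta,\xi)$ and replace the coupling by one on the product spaces with the corresponding product marginals.
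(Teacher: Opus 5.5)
Your proposal is correct and takes essentially the same route as the paper's proof. It pushes $\pi$ forward through $(\tau_x,\tau_x)$ to obtain a coupling of $P_x$ and $Q_x$, splits the transport cost over $G$ and $G^c$, and bounds the two pieces by the assumed local inequality and the range diameter $D_R$. Your explicit check that the support condition of Assumption~\ref{ass:range} leaves only a $\pi$-null exceptional set is a detail the paper leaves implicit.
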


\begin{proof}
Push $\pi$ through $(\tau_x,\tau_x)$, split the transportation cost over $G$ and $G^c$, use the assumed inequality on $G$, and use the explanation-range diameter on $G^c$.
\end{proof}

The set $G$ may represent a high-probability parameter region with compatible activation patterns. The second term accounts for activation-region crossings, tails, and other cases in which the local inequality does not hold. The pairwise inequality on $G$ is assumed directly and is not derived from smoothness or gradient bounds on a possibly non-convex region.

\begin{proposition}[Model and stochastic-explainer variability]
\label{prop:totalvar}
Let $\Theta\sim q_\phi$, $\Xi\sim P_\xi$, $\Theta\perp\Xi$, and $R=\tau_x(\Theta,\Xi)$ have finite second moments. Define $m_x(\theta)=\mathbb E_\Xi[\tau_x(\theta,\Xi)]$. Then, coordinatewise,
\begin{equation}
 \operatorname{Var}(R[n])=
 \underbrace{\operatorname{Var}_\Theta(m_x(\Theta)[n])}_{\text{model variability}}
 +\underbrace{\mathbb E_\Theta[\operatorname{Var}_\Xi(R[n]\mid\Theta)]}_{\text{explainer variability}}.
 \label{eq:totalvar}
\end{equation}
\end{proposition}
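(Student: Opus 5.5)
This is the law of total variance applied coordinatewise, conditioning on the model parameters. The plan rests on the independence $\Theta\perp\Xi$, which lets the conditional expectation given $\Theta$ be computed by integrating over $\Xi$ alone. Fix a coordinate $n$ and write $Y=R[n]=\tau_x(\Theta,\Xi)[n]$. By Assumption~\ref{ass:1}, $\tau_x$ is jointly Borel measurable, so $Y$ is a random variable. By hypothesis $\mathbb E[Y^2]<\infty$.

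\textbf{Step 1: identify the conditional mean.} For $q_\phi$-almost every $\theta$, I will show that
\[
\mathbb E[Y\mid\Theta=\theta]=\int_\Xi \tau_x(\theta,\xi)[n]\,dP_\xi(\xi)=m_x(\theta)[n].
\]
Because the joint law is the product measure $q_\phi\otimes P_\xi$, Fubini--Tonelli applies. Apply it to $|Y|$, which is integrable since it is square-integrable on a probability space. This gives that the section $\xi\mapsto\tau_x(\theta,\xi)[n]$ is $P_\xi$-integrable for a.e.\ $\theta$. It also gives that $\theta\mapsto m_x(\theta)[n]$ is measurable. Then, for any bounded measurable $h$, $\mathbb E[Y h(\Theta)]=\int h(\theta)\,m_x(\theta)[n]\,dq_\phi(\theta)$. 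This is exactly the defining property of the conditional expectation. The same argument applied to $Y^2$ shows that
\[
\operatorname{Var}_\Xi(R[n]\mid\Theta=\theta)=\int \tau_x(\theta,\xi)[n]^2\,dP_\xi-m_x(\theta)[n]^2
\]
is a version of the conditional variance and is finite for a.e.\ $\theta$.

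\textbf{Step 2: decompose.} Write $Y-\mathbb E Y=(Y-m_x(\Theta)[n])+(m_x(\Theta)[n]-\mathbb E Y)$ and square. By the tower property, $\mathbb E[m_x(\Theta)[n]]=\mathbb E Y$. The cross term is
\[
\mathbb E\bigl[(m_x(\Theta)[n]-\mathbb EY)\,\mathbb E[Y-m_x(\Theta)[n]\mid\Theta]\bigr]=0.
\]
This step is legitimate because both factors are in $L^2$: $m_x(\Theta)[n]$ is in $L^2$ by conditional Jensen. The Cauchy--Schwarz inequality then makes the product integrable, so the tower property may be applied to it.

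\textbf{Step 3: identify the two terms.} The first squared term is $\mathbb E[(Y-m_x(\Theta)[n])^2]=\mathbb E_\Theta[\operatorname{Var}_\Xi(R[n]\mid\Theta)]$, again by the tower property. The second squared term is $\operatorname{Var}_\Theta(m_x(\Theta)[n])$. Together these give \eqref{eq:totalvar}. Since $n$ was arbitrary, the identity holds coordinatewise.

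The only real obstacle is the measure-theoretic bookkeeping in Step 1. One must justify that integrating out $\Xi$ yields a genuine version of the conditional expectation and conditional variance, and that these are measurable and finite almost everywhere. The product structure together with Fubini resolves this, so I would state Step 1 carefully and treat the rest as the standard total-variance computation.
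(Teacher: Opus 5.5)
Your proposal is correct and matches the paper's proof, which is the one-line observation that \eqref{eq:totalvar} is the law of total variance conditional on $\Theta$. Your Step~1 spells out what the paper leaves implicit: using $\Theta\perp\Xi$ and Fubini to show that $m_x(\Theta)[n]$ and $\int\tau_x(\Theta,\xi)[n]^2\,dP_\xi(\xi)-m_x(\Theta)[n]^2$ are versions of the conditional mean and conditional variance, which is what justifies writing the model-variability term in terms of $m_x$.
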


\begin{proof}
This is the law of total variance conditional on $\Theta$.
\end{proof}

For LIME, one independent $\xi$ is paired with each sampled $\theta$. This estimates total explanation variability but does not separate the two components in Eq.~\eqref{eq:totalvar}. We do not assume that the LIME/Lasso solution map is uniformly Lipschitz in the model parameters.

\begin{remark}[Applicability to the tested operators]
\label{rem:lip_ops}
Occlusion maps based on class-probability differences are measurable and bounded. Grad-CAM is Borel measurable as a piecewise-defined map and may cross activation boundaries. Lemma~\ref{lem:piecewise} applies when its within-$G$ pairwise inequality and bounded-range condition hold. Under joint measurability and the required moments, LIME on the product space is covered by Propositions~\ref{prop:mc} and~\ref{prop:totalvar}. Other discontinuous operators can also be used in the push-forward and Monte Carlo results when they are measurable, but a Wasserstein stability result requires an additional condition.
\end{remark}

\subsection{Uncertainty-Aware Relevance Attribution Operator}

The attribution target is a fixed class $c$. It may be specified by the user or chosen from the prediction, while the experiments below use the true class.

\begin{definition}[Uncertainty-Aware Relevance Attribution Operator]
\label{def:uarao}
Let $\mathcal P(\mathbb R^N)$ denote the probability measures on $\mathbb R^N$, and let $\mathcal P_k(\mathbb R^N)$ denote those with finite $k$th moment. For a chosen summary functional
\[
 \Phi:\mathcal D_\Phi\to\mathcal Y_\Phi,
 \qquad \mathcal D_\Phi\subseteq\mathcal P(\mathbb R^N),
\]
the UA-RAO is $\mathcal U_\Phi(x)=\Phi(Q_x)$. Its empirical version based on $S$ explanation draws is denoted $\widehat{\mathcal U}_\Phi(x)$. The domain is summary-specific: the mean requires $Q_x\in\mathcal P_1(\mathbb R^N)$, variance requires $Q_x\in\mathcal P_2(\mathbb R^N)$, coefficient of variation additionally uses a stabiliser $\kappa>0$, and coordinatewise quantiles and agreement frequencies require no moment condition. For the empirical summaries below, $\hat s_n$ is the sample standard deviation and $r_{(k),n}$ is the $k$th order statistic at coordinate $n$; $\alpha\in(0,1)$ is the quantile level, $\delta$ is a relevance threshold, and $\eta\in[0,1]$ is the required agreement fraction.
\end{definition}

\begin{table}[!htb]
\centering
\caption{Main UA-RAO summaries and their interpretation.}
\label{tab:uarao}
\footnotesize
\begin{tabularx}{\linewidth}{p{2.4cm}Xp{4.2cm}}
\toprule
Summary & Empirical form & Interpretation \\
\midrule
Mean & $\bar R=S^{-1}\sum_sR^{(s)}$ & Central tendency of member-wise explanations. \\
Variance & $\hat s_n^2=(S-1)^{-1}\sum_s(R^{(s)}[n]-\bar R[n])^2$ & Coordinatewise total dispersion. \\
$\alpha$-quantile & $r_{(\lceil\alpha S\rceil),n}$ & Coordinatewise lower-to-upper support summary. \\
Coefficient of variation & $\hat s_n/(|\bar R[n]|+\kappa)$ & Relative dispersion where mean relevance is non-negligible. \\
Agreement set & $\widehat{\mathcal E}_{\eta,\delta}=\{n:\widehat\rho_{n,S}^\delta\ge\eta\}$, $\widehat\rho_{n,S}^\delta=S^{-1}\sum_s\mathbf1\{R^{(s)}[n]>\delta\}$ & Features exceeding $\delta$ with agreement at least $\eta$. \\
\bottomrule
\end{tabularx}
\end{table}

\begin{remark}[Meaning and consistency of the summaries]
\label{rem:consistency}
The mean UA-RAO is the expected member-wise explanation. For a nonlinear operator such as Grad-CAM or LIME, it generally differs from the explanation of the posterior-predictive mean. The framework averages explanations and does not assume that explanation and model averaging can be interchanged.

The strong law gives consistency of the mean. Applying it to $R$ and $R^2$, followed by the continuous mapping theorem, gives consistency of the variance. The same argument applies to the coefficient of variation because its denominator is bounded below by $\kappa$. For coordinate $n$, let $F_n$ be the cumulative distribution function of $R[n]$. The empirical $\alpha$-quantile is consistent when the population quantile $q_{\alpha,n}$ is the unique $z$ satisfying
\[
 F_n(z^-)\le\alpha\le F_n(z),
\]
which allows atomic explanation distributions \cite{serfling1980approximation}. Finally, $\widehat\rho_{n,S}^\delta\to\rho_n^\delta=P(R[n]>\delta)$ almost surely, including when there is an atom at $\delta$. Because $N$ is finite, the thresholded set stabilises if the boundary-separation condition $\rho_n^\delta\ne\eta$ holds for every $n$.
\end{remark}

\begin{corollary}[Input stability under aligned coupling]
\label{cor:input}
Let $Q_{x,n}$ be the $n$th marginal of $Q_x$, let $d$ be a metric on the input space, and define
\[
 D_{\mathrm{coord}}(Q_x,Q_{x'})=\frac1N\sum_{n=1}^N W_1(Q_{x,n},Q_{x',n}).
\]
Let $A_{x,x'}$ be a fixed measurable alignment operator. If, for a measurable nonnegative function $L_{x,x'}$,
\[
 \frac1N\|\tau_x(\theta)-A_{x,x'}\tau_{x'}(\theta)\|_1
 \le L_{x,x'}(\theta)d(x,x'),
 \qquad \mathbb E_{q_\phi}[L_{x,x'}(\Theta)]<\infty,
\]
then
\begin{equation}
 D_{\mathrm{coord}}\!\left(Q_x,(A_{x,x'})_\#Q_{x'}\right)
 \le \mathbb E_{q_\phi}[L_{x,x'}(\Theta)]d(x,x').
 \label{eq:inputstab}
\end{equation}
For a stochastic explainer, the coupling reuses the same $(\Theta,\Xi)$.
\end{corollary}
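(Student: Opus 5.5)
The plan is to build one explicit coupling from a single parameter draw and bound each coordinatewise Wasserstein distance by the expected transport cost under it. Let $\Theta\sim q_\phi$, or $(\Theta,\Xi)\sim q_\phi\otimes P_\xi$ for a stochastic explainer. Set $R=\tau_x(\Theta)$ and $R'=A_{x,x'}\tau_{x'}(\Theta)$.

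\textbf{Step 1: the coupling.} By Assumption~\ref{ass:1} and the measurability of $A_{x,x'}$, both $R$ and $R'$ are measurable functions of the same $\Theta$. Their laws are $Q_x$ and $(A_{x,x'})_\#Q_{x'}$, using the push-forward identity of Definition~\ref{def:1} and the composition rule $(A\circ\tau_{x'})_\#q_\phi=A_\#(\tau_{x'})_\#q_\phi$. Projecting onto coordinate $n$, the pair $(R[n],R'[n])$ is a coupling of $Q_{x,n}$ and the $n$th marginal of $(A_{x,x'})_\#Q_{x'}$.

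\textbf{Step 2: marginal bound.} Since $W_1$ is an infimum over couplings, this gives
\[
 W_1\bigl(Q_{x,n},((A_{x,x'})_\#Q_{x'})_n\bigr)\le \mathbb E\,|R[n]-R'[n]|
\]
for each $n$. The right-hand side is finite once we know $\mathbb E\|R-R'\|_1<\infty$, which follows from Step~3. If the $W_1$ distance is read as requiring finite first moments of each marginal, that is supplied by Assumption~\ref{ass:range}. Otherwise the coupling inequality still holds, with the infimum of transport cost over couplings serving as the definition of $W_1$.

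\textbf{Step 3: sum and integrate.} Average the Step~2 bounds over $n$ and exchange the finite sum with the expectation. This gives
\[
 D_{\mathrm{coord}}\le \mathbb E\Bigl[\tfrac1N\|\tau_x(\Theta)-A_{x,x'}\tau_{x'}(\Theta)\|_1\Bigr].
\]
Apply the assumed pointwise inequality inside the expectation. Its right-hand side $L_{x,x'}(\Theta)d(x,x')$ is nonnegative and measurable, so monotonicity of the integral yields $\mathbb E_{q_\phi}[L_{x,x'}(\Theta)]\,d(x,x')$. This quantity is finite by hypothesis, which also settles the integrability needed in Step~2.

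\textbf{Stochastic explainer.} The same chain of inequalities works with $(\Theta,\Xi)$ in place of $\Theta$ and the product measure in place of $q_\phi$. The pointwise inequality must then hold for each $(\theta,\xi)$, with $L$ possibly depending on $\xi$; the expectation is taken over $q_\phi\otimes P_\xi$.

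\textbf{Main obstacle.} The only delicate point is the measurability and law identification in Step~1. We must check that $A_{x,x'}\circ\tau_{x'}$ is Borel, so that the pair $(R,R')$ is a genuine random vector whose marginals are exactly the two target measures. Once this is settled, the rest is the standard synchronous-coupling argument and needs no Lipschitz property of $\tau$ in $\theta$.
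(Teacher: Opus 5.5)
Your proposal is correct and follows the paper's proof: couple both explanations through the same draw $\Theta$ (or $(\Theta,\Xi)$), bound each marginal $W_1$ by the expected aligned coordinate difference, and average over coordinates before applying the pointwise hypothesis. One small point is that the appeal to Assumption~\ref{ass:range} for moments of $(A_{x,x'})_\#Q_{x'}$ is unnecessary, because integrability of $R'[n]$ already follows from that of $R[n]$ together with $\mathbb E|R[n]-R'[n]|<\infty$.
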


\begin{proof}
Couple the two explanations using the same parameter draw, bound each marginal $W_1$ by the expected aligned coordinate difference, and sum over coordinates.
\end{proof}

The circular-shift alignment in Section~\ref{sec:sensitivity} is a coordinate permutation and therefore an $\ell_1$-isometry. Section~\ref{sec:sensitivity} uses this property to measure the remaining local distributional change after alignment.

\begin{proposition}[Finite-sample order-statistic explanation band]
\label{prop:orderband}
For a fixed coordinate $n$, suppose $R^{(1)}[n],\ldots,R^{(S+1)}[n]$ are exchangeable. Let $R_{(1),n}\le\cdots\le R_{(S),n}$ be the order statistics of the first $S$ values and let $1\le k<\ell\le S$. If ties occur with probability zero, then
\[
 \Pr\{R_{(k),n}\le R^{(S+1)}[n]\le R_{(\ell),n}\}
 =\frac{\ell-k}{S+1}.
\]
For a closed band with possible ties, the probability is at least $(\ell-k)/(S+1)$ \cite{david2003order}.
\end{proposition}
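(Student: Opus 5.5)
The plan is the standard rank argument from conformal prediction, applied to the $S+1$ exchangeable scalars $Z_i:=R^{(i)}[n]$. Let $r$ be the rank of $Z_{S+1}$ among $Z_1,\ldots,Z_{S+1}$, so that exactly $r-1$ of the others lie strictly below it.

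\textbf{Uniform rank.} With ties of probability zero, the ranks $(\mathrm{rank}(Z_1),\ldots,\mathrm{rank}(Z_{S+1}))$ form almost surely a permutation of $\{1,\ldots,S+1\}$. Exchangeability makes the joint law invariant under relabelling, so each $Z_i$ is equally likely to hold rank $j$. Since, for fixed $j$, the events $\{\mathrm{rank}(Z_i)=j\}$, $i=1,\ldots,S+1$, partition the sample space up to a null set, we get $\Pr\{r=j\}=1/(S+1)$ for every $j$.

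\textbf{Translate the band into ranks.} Without ties, $R_{(k),n}<Z_{S+1}$ holds exactly when at least $k$ of the first $S$ values lie below $Z_{S+1}$, that is, when $r\ge k+1$. Likewise $Z_{S+1}<R_{(\ell),n}$ holds exactly when at most $\ell-1$ of them lie below it, that is, when $r\le \ell$. Equality events have probability zero, so the closed band equals $\{k+1\le r\le \ell\}$ almost surely. This event has probability $(\ell-k)/(S+1)$.

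\textbf{Ties.} For the closed band with possible ties, the simplest route is to break ties by i.i.d.\ uniform auxiliary labels $U_i$, independent of the $Z_i$. The pairs $(Z_i,U_i)$ are exchangeable and ordered lexicographically without ties almost surely, so the previous argument gives probability exactly $(\ell-k)/(S+1)$ that the lexicographic rank of $Z_{S+1}$ lies in $\{k+1,\ldots,\ell\}$. On that event, the $k$th lexicographic order statistic of the first $S$ pairs has first coordinate $R_{(k),n}\le Z_{S+1}$, and the $\ell$th has first coordinate $R_{(\ell),n}\ge Z_{S+1}$. Here the first coordinates of the lexicographic order statistics coincide with the ordinary order statistics $R_{(j),n}$. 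So the randomized event is contained in the closed band, which gives the inequality.

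\textbf{Main obstacle.} The hard part is the tie case: one must check that the randomized tie-break preserves exchangeability and that order statistics of the pairs project onto ordinary order statistics. Both hold because $U$ is independent of the $Z_i$ and lexicographic order refines the order on first coordinates. Alternatively, one can cite \cite{david2003order} for this step.
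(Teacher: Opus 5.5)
Your proof is correct, including the tie case. The independent uniform labels keep the pairs $(Z_i,U_i)$ exchangeable and make the lexicographic ranks a permutation almost surely. Sorting lexicographically also sorts the first coordinates, so the event $\{k+1\le r_{\mathrm{lex}}\le \ell\}$, which has probability $(\ell-k)/(S+1)$, lies inside the closed band.

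The paper gives no derivation of its own and simply cites David and Nagaraja. The textbook result behind that citation is usually stated for i.i.d.\ draws from a continuous distribution. It is typically proved via the probability integral transform, using $\mathbb{E}[F_n(R_{(j),n})]=j/(S+1)$, or via an equally-likely-position argument for a new i.i.d.\ draw.

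Your rank argument needs only finite exchangeability and no continuity of a common distribution function. That is exactly the hypothesis the proposition states, and it is the assumption the paper actually invokes for the leave-one-out ensemble check in Section~\ref{sec:calibration}. There, exchangeability of members is a modelling assumption rather than independence with a known continuous law, and occlusion relevances can have atoms.

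The citation buys brevity. Your version makes the proposition self-contained and shows explicitly why ties turn the equality into a lower bound.
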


This result gives distribution-free, pointwise finite-sample coverage for a new exchangeable explanation draw. It describes coverage within the sampling mechanism rather than calibration against an ideal explanation distribution.

\endgroup

\section{Evaluation Framework}
\label{sec:eval}

Building on the theoretical guarantees in Section~\ref{sec:theo}, this section defines the evaluation framework for assessing UA-RAO explanations empirically. It specifies the ground-truth disturbance reference, localisation metrics for point explanations, their extension to distributional UA-RAO summaries, and the qualitative visualisation protocol. These definitions are independent of a particular posterior approximation or attribution operator and provide the basis for the experiments in Section~\ref{sec:experiments}.

\subsection{Ground-Truth Disturbance Segmentation}
\label{sec:eval:gt}

Quantitative evaluation of any XAI method requires a reference against which explanation quality can be measured. In PQD classification, ground-truth disturbance locations are directly available from the signal generation model, a property that avoids the annotation scarcity which limits XAI evaluation in many other domains \cite{nauta2023anecdotal}. In practice, the observed waveform $x$ is a noisy observation of a disturbed signal,
\begin{equation}
	x[n] = x_0[n] + d_x[n] + \sigma[n], \qquad n = 1, \dots, N,
\end{equation}
where $x_0[n]$ is the disturbance-free reference of Eq.~\eqref{eq:1}, $d_x[n]$ is the disturbance component associated with input $x$, and $\sigma[n]$ represents additive measurement noise. Since $d_x[n]$ is known from the signal generation process, disturbance regions are identified as time indices at which the disturbance component exceeds a prescribed tolerance.

\begin{definition}[$\varepsilon$-Ground-Truth Disturbance Mask]
	\label{def:gt}
	Let $\varepsilon > 0$ be a threshold representing the minimum meaningful disturbance amplitude. The \emph{$\varepsilon$-ground-truth disturbance mask} is the binary vector $R_{\mathrm{GT}}(x) \in \{0,1\}^N$ defined componentwise by
	\begin{equation}
		\label{eq:gt}
		R_{\mathrm{GT}}(x)[n]
		=
		\begin{cases}
			1, & \text{if } |d_x[n]| > \varepsilon, \\
			0, & \text{otherwise,}
		\end{cases}
		\qquad n = 1, \dots, N,
	\end{equation}
	where $R_{\mathrm{GT}}(x)[n] = 1$ indicates that time index $n$ lies within a disturbance region. The set of all disturbance indices is denoted $\mathcal{I}(x) = \{n : R_{\mathrm{GT}}(x)[n] = 1\}$, and its cardinality $L = |\mathcal{I}(x)|$ is the ground-truth disturbance length.
\end{definition}

The mask $R_{\mathrm{GT}}(x)$ depends on the disturbance component associated with input $x$ and on the choice of threshold $\varepsilon$. It is independent of the classifier parameters $\theta$ and the attribution operator $\tau_x$, and thus constitutes a model-agnostic evaluation reference.

\subsection{Localisation Metrics for Point Explanations}
\label{sec:eval:point}

For any real-valued saliency map $R$, two scalar metrics are adopted to measure the alignment between $R$ and $R_{\mathrm{GT}}(x)$. Throughout the quantitative evaluation, real-valued saliency maps are assumed to be nonnegative, i.e., $R \in \mathbb{R}_{\geq 0}^N$. Signed attribution maps are converted to nonnegative relevance maps prior to evaluation.

\noindent\textbf{Relevance Mass Accuracy (RMA)} The RMA of a nonnegative saliency map $R \in \mathbb{R}_{\geq 0}^N$ with respect to $R_{\mathrm{GT}}(x)$ is
	\begin{equation}
		\label{eq:rma}
		\mathrm{RMA}(R,\, x)
		=
		\frac{
			\displaystyle\sum_{n \in \mathcal{I}(x)} R[n]
		}{
			\displaystyle\sum_{n=1}^{N} R[n]
		},
	\end{equation}
	defined whenever $\sum_{n=1}^{N} R[n] \neq 0$. RMA measures the proportion of total relevance mass concentrated within the true disturbance region.

\noindent\textbf{Binarized Intersection over Union (IoU)} Let $L = |\mathcal{I}(x)|$ be the ground-truth disturbance length and assume $L>0$. The \emph{top-$L$ binarisation} of $R \in \mathbb{R}_{\geq 0}^N$ is
	\begin{equation}
		\label{eq:topL}
		\hat{R}[n]
		=
		\begin{cases}
			1, & \text{if } n \in \mathcal{T}_L(R), \\
			0, & \text{otherwise,}
		\end{cases}
	\end{equation}
	where $\mathcal{T}_L(R) \subseteq \{1, \dots, N\}$ denotes the index set of the $L$ largest values of $R$. Let $\widehat{\mathcal{I}}_R = \{n:\hat{R}[n]=1\} = \mathcal{T}_L(R)$ be the predicted disturbance index set. The IoU score is
	\begin{equation}
		\label{eq:iou}
		\mathrm{IoU}(R,\, x)
		=
		\frac{
			|\widehat{\mathcal{I}}_R \cap \mathcal{I}(x)|
		}{
			|\widehat{\mathcal{I}}_R \cup \mathcal{I}(x)|
		}
		\;\in [0, 1],
	\end{equation}
	where $|\cdot|$ denotes set cardinality, the intersection counts positions identified as disturbance by both $\widehat{\mathcal{I}}_R$ and $\mathcal{I}(x)$, and the union counts positions identified by either set. A score of $1$ indicates perfect overlap.

\begin{remark}[Consistency of the Top-$L$ Binarization]
	The top-$L$ binarisation in Eq.~\eqref{eq:topL} enforces that $|\widehat{\mathcal{I}}_R| = L = |\mathcal{I}(x)|$, so that the cardinality of the predicted mask matches the ground truth exactly. This eliminates the confounding effect of threshold selection on set size, making IoU a measure of localisation quality rather than of detection scale.
\end{remark}

\begin{remark}[Local XAI and Metric Sensitivity]
	\label{rem:metrics_type}
	The disturbance interval $\mathcal{I}(x)$ varies in duration and location across both disturbance categories and individual instances. For long-duration disturbances (e.g., harmonics), $|\mathcal{I}(x)| \approx N$ and high RMA or IoU scores are trivially achievable. For short-duration events (e.g., impulsive transients), $|\mathcal{I}(x)| \ll N$ and both metrics are sensitive discriminators. This instance-dependent structure makes local explanations, i.e., attributions specific to each input $x$, the natural object of evaluation against $R_{\mathrm{GT}}(x)$.
\end{remark}

Under the nonnegativity condition on $R$ and the requirement $\sum_{n=1}^{N}R[n]\neq0$, RMA is bounded in $[0,1]$. IoU is also bounded in $[0,1]$ for $L>0$, with larger values indicating better alignment. The two metrics are complementary: RMA is sensitive to the distribution of relevance mass and does not require binarisation, while IoU assesses spatial overlap after binarisation and is insensitive to the magnitude of non-disturbance relevance.

\subsection{Distributional Evaluation of UA-RAO Instantiations}
\label{sec:eval:dist}

The UA-RAO framework produces a family of explanation summaries $\{\widehat{\mathcal{U}}_\Phi(x)\}_\Phi$ rather than a single saliency map. The localisation metrics are extended to this distributional setting as follows.

\noindent\textbf{Point instantiations.} For UA-RAO instantiations that produce nonnegative real-valued relevance maps, specifically the mean explanation $\widehat{\mathcal{U}}^{\mathrm{mean}}$ and the $\alpha$-quantile explanation $\widehat{\mathcal{U}}^{\alpha}$, the localisation metrics are applied directly:
\begin{equation}
	\label{eq:rma_phi}
	\mathrm{RMA}_\Phi(x)
	= \mathrm{RMA}\!\left(\widehat{\mathcal{U}}_\Phi(x),\, x\right),
	\qquad
	\mathrm{IoU}_\Phi(x)
	= \mathrm{IoU}\!\left(\widehat{\mathcal{U}}_\Phi(x),\, x\right).
\end{equation}

\noindent\textbf{Set-valued instantiations.} The agreement-set UA-RAO $\widehat{\mathcal{E}}_{\eta,\delta}$ in Table~\ref{tab:uarao} produces a binary index set rather than a real-valued map. Lower values of $\eta$ give a more inclusive, union-like set, whereas higher values give a more conservative, intersection-like set. Such sets require set-based precision--recall evaluation rather than RMA or IoU as defined above. Their systematic evaluation across disturbance categories, including sensitivity to $(\eta,\delta)$, is deferred to future work.

\subsection{Qualitative Evaluation}
\label{sec:eval:qual}

Qualitative evaluation complements the quantitative metrics through visual inspection of the explanation distribution and its summaries.

\noindent\textbf{Saliency map visualisation.} Each real-valued UA-RAO output $\widehat{\mathcal{U}}_\Phi(x) \in \mathbb{R}^N$ is normalised to $[0,1]$ using min--max normalisation prior to display:
\begin{equation}
	\label{eq:minmax}
	\widetilde{\mathcal{U}}_\Phi(x)[n]
	=
	\frac{
		\widehat{\mathcal{U}}_\Phi(x)[n]
		- \min_{n'} \widehat{\mathcal{U}}_\Phi(x)[n']
	}{
		\max_{n'} \widehat{\mathcal{U}}_\Phi(x)[n']
		- \min_{n'} \widehat{\mathcal{U}}_\Phi(x)[n']
	},
\end{equation}
and rendered using the \texttt{cividis} colormap, which provides a perceptually uniform monotone mapping from $[0,1]$ to colour and remains legible for colour-vision-deficient readers. When a binary ground-truth mask is displayed, it is rendered separately using a grayscale map. The mean explanation, representative $\alpha$-quantile explanations, variance map $\widehat{\mathcal{U}}^{\mathrm{var}}(x)$, and coefficient-of-variation map $\widehat{\mathcal{U}}^{\mathrm{cv}}(x)$ are displayed for visual comparison. When evaluated quantitatively, uncertainty maps are treated as diagnostic localisation baselines rather than primary relevance explanations.

\section{Experiments}
\label{sec:experiments}

This section applies the evaluation framework from Section~\ref{sec:eval} to PQD classification experiments. The ground-truth masks, RMA, and IoU metrics defined in Section~\ref{sec:eval} are evaluated across selected BNN--attribution configurations. The experimental setup is described first, followed by aggregate comparisons, per-class localisation analysis, qualitative evaluation on both synthetic and real-world data, input-sensitivity tests under controlled distortions, and finite-sample self-coverage of explanation bands.

\subsection{Experimental Setup}
\label{sec:eval:setup}

\begin{table}[t]
	\footnotesize
	\centering
	\caption{Lightweight 1D-CNN architecture.}
	\label{tb:DCNN}
	\begin{tabular}{@{}lll@{}}
		\toprule
		Layer & Details & Activation \\
		\midrule
		\texttt{convolution\_1} & kernel: $1 \times 3$, channel: 8, stride: 1 & ReLU \\
		\texttt{convolution\_2} & kernel: $1 \times 3$, channel: 8, stride: 1 & ReLU \\
		\texttt{max\_pooling\_1} & kernel: $1 \times 3$, stride: 1 & \\
		\texttt{batch\_normalization\_1} & num\_features: 8 & \\
		\texttt{convolution\_3} & kernel: $1 \times 3$, channel: 16, stride: 1 & ReLU \\
		\texttt{convolution\_4} & kernel: $1 \times 3$, channel: 16, stride: 1 & ReLU \\
		\texttt{max\_pooling\_2} & kernel: $1 \times 630$ & \\
		\texttt{batch\_normalization\_2} & num\_features: 16 & \\
		\texttt{flatten\_1} & & \\
		\texttt{fully\_connected\_1} & num\_features: 64 & ReLU \\
		\texttt{batch\_normalization\_3} & num\_features: 64 & \\
		\texttt{fully\_connected\_2} & num\_features: 16 & Softmax \\
		\bottomrule
	\end{tabular}
\end{table}

The data are generated from the synthetic PQD setting of XPQRS \cite{khan2023xpqrs}. Each voltage record spans ten cycles of a nominal sinusoidal waveform and includes additive noise and labelled disturbance events. The dataset includes a normal (undisturbed) class. Localisation scores are computed only for the 15 disturbance classes: because a normal waveform has no disturbance component $d_x$, the ground-truth mask $R_{\mathrm{GT}}(x)$ is identically zero and RMA and IoU are undefined. The evaluation uses five independent test splits with 1500 non-normal instances per split, giving 7500 evaluated waveforms in total. Ground-truth masks are obtained from the clean disturbance reference following Section~\ref{sec:eval:gt}.

The classifier is based on the 1D-CNN architecture of Wang and Chen \cite{wang2019novel}. Because the proposed evaluation requires repeated posterior sampling and repeated local attribution computation for each waveform, we use a lightweight variant of this 1D-CNN to reduce computational cost while preserving the same convolutional design principle. Specifically, the number of convolutional channels and fully connected units is reduced, and the final temporal aggregation is implemented through a global max-pooling operation over the remaining temporal dimension. The resulting architecture is summarised in Table~\ref{tb:DCNN}. Unless noted otherwise, attribution is computed for the true class label and signed attributions are converted to absolute relevance.

\begin{table}[t]
	
	\footnotesize
	\centering
	\caption{Training hyperparameters shared by all evaluated networks (deterministic baseline, the five deep-ensemble members, and the networks underlying the Laplace and MC Dropout variants).}
	\label{tb:hyper}
	\begin{tabular}{@{}ll@{}}
		\toprule
		Hyperparameter & Value \\
		\midrule
		Optimiser & Adam ($\beta_1=0.9$, $\beta_2=0.999$) \\
		Initial learning rate & $10^{-2}$ \\
		Learning-rate schedule & step decay, $\times 0.5$ every 10 epochs \\
		Weight decay & $10^{-4}$ \\
		Batch size & 64 \\
		Training epochs & 100 \\
		Loss & categorical cross-entropy (Eq.~\eqref{eq:loss}) \\
		Training / validation split & 12{,}960 / 1{,}440 waveforms (90\%/10\%) \\
		Model selection & checkpoint with lowest validation loss \\
		Ensemble member seeds & 2026--2030 (initialisation and data shuffling) \\
		\bottomrule
	\end{tabular}
\end{table}

\noindent\textbf{Training protocol.} All networks use the protocol in Table~\ref{tb:hyper}. They are trained for 100 epochs with Adam, an initial learning rate of $10^{-2}$ halved every 10 epochs, weight decay $10^{-4}$, and batch size 64. The checkpoint with the lowest validation loss is retained. The training pool contains 12{,}960 training and 1{,}440 validation waveforms. As the lightweight architecture differs from that of Wang and Chen \cite{wang2019novel}, the protocol was selected using the validation split and is reported in full. The five ensemble members differ only in their random seeds (2026--2030), which control parameter initialisation and mini-batch shuffling.

The evaluation considers three BNN posterior approximations and three post-hoc attribution operators, covering distinct uncertainty estimation and explanation mechanisms. This design allows UA-RAO and its localisation metrics to be assessed across multiple methodological settings rather than a single configuration.

\noindent\textbf{Uncertainty estimation methods:}
\begin{itemize}
\item \emph{Deep ensembles} \cite{lakshminarayanan2017simple} train multiple independently initialised networks and use the diversity of their predictions as a non-parametric estimate of predictive uncertainty. They are well-calibrated and robust to distribution shift but incur proportionally higher training cost. Within the UA-RAO framework, each ensemble member serves as a draw from the implicit function-space distribution induced by random initialisation and stochastic optimisation, acting as a proxy for $q_\phi$ without an explicit parametric posterior.
\item \emph{MC Dropout} \cite{gal2016dropout} interprets test-time dropout as approximate Bayesian inference, generating stochastic predictions by randomly masking network units. It is computationally lightweight and imposes no architectural constraint beyond standard dropout layers.
\item \emph{Laplace approximation} \cite{ritter2018scalable} fits a Gaussian centred at the maximum \emph{a posteriori} (MAP) solution by locally approximating the log-posterior curvature. It operates on a pre-trained network without retraining and can therefore be applied post hoc to any existing classifier.
\end{itemize}

\noindent\textbf{Attribution operators.}
\begin{itemize}
\item\emph{Grad-CAM} \cite{selvaraju2017gradcam} weights the feature maps of a target convolutional layer by the gradient of the class score with respect to those maps, producing a coarse gradient-based localisation map without modifying the network.
\item\emph{LIME} \cite{ribeiro2016trust} fits a sparse linear surrogate model in the neighbourhood of an input instance via perturbation-based sampling, yielding a local, model-agnostic explanation that requires no access to network gradients.
\item\emph{Occlusion} \cite{zeiler2014visualizing} slides a masking window across the input and records the change in class probability at each position, providing a perturbation-based relevance score that requires neither gradient access nor a differentiable model.
These three operators represent gradient-based, surrogate-model, and perturbation-based explanation mechanisms. Their structural diversity ensures that the reported UA-RAO results are not artefacts of a single attribution family. 
\end{itemize}

The detailed experimental settings are as follows. Occlusion uses a window length of 60 and stride 1, LIME uses 128 perturbation samples with feature width 16, and Grad-CAM is taken from the fourth convolutional layer. Signed saliency maps (Grad-CAM) are converted to nonnegative relevance via the absolute value before applying RMA and IoU, consistent with the nonnegativity assumption in Section~\ref{sec:eval:point}. Bayesian variants are evaluated with the same attribution operator. The deep ensemble contains five independently trained members. The Laplace approximation follows the scalable neural-network Laplace procedure of Ritter et al.\ \cite{ritter2018scalable} with a diagonal Fisher approximation. Its posterior precision is $\Lambda = s\,F + \delta I$, where $F$ is the accumulated diagonal Fisher and $I$ is the identity matrix. The damping term $\delta = 10^2$ corresponds to a zero-mean isotropic Gaussian prior $p(\theta) = \mathcal{N}(0,\,\delta^{-1}I)$ with standard deviation $0.1$, and it also stabilises near-zero curvature entries. Following the post-hoc procedure in \cite{ritter2018scalable}, the factor $s = 1.75\times10^{10}$ is selected so that sampled predictive confidence matches empirical validation confidence. The large value of $s$ indicates that the raw diagonal Fisher underestimates curvature for this lightweight architecture; without rescaling, the posterior variance is too large. Both $\delta$ and $s$ are therefore reported. MC Dropout uses dropout probability $0.2$. Laplace approximation and MC Dropout each use 20 posterior samples. Table~\ref{tb:hyper} lists the remaining hyperparameters.

\subsection{Numerical Results and Discussion}
\label{sec:eval:results}

The per-class analysis in Section~\ref{sec:eval:perclass} shows that eight of the 15 disturbance categories (Flicker, Flicker+Harmonics, Flicker+Sag, Flicker+Swell, Harmonics, Interruption+Harmonics, Sag+Harmonics, and Swell+Harmonics) reach IoU $\approx 0.997$ for almost every method. Their disturbance regions cover most of the waveform, so a broad relevance map can obtain near-unity overlap. To keep these ceiling-effect classes from dominating the comparison, the aggregate tables also report macro RMA and IoU for the seven discriminative classes: Impulsive Transient, Interruption, Notch, Oscillatory Transient, Sag, Spike, and Swell. These disc-7 scores are the unweighted means of the class-wise scores, averaged over the five test splits.

\begin{table}[H]
	\begin{adjustwidth}{-2.0cm}{-2.0cm}
	\centering
	
	\caption{Comparison of deterministic and Bayesian occlusion explanations using mean UA-RAO. Results are mean $\pm$ standard deviation across five test splits. The disc-7 columns report unweighted macro scores after excluding the eight ceiling-effect classes defined in Section~\ref{sec:eval:results}. \textbf{Bold} indicates the best value in each metric column.}
	\label{tab:bnn_results}
	\small
	\resizebox{\linewidth}{!}{%
	\begin{tabular}{@{}lccccc@{}}
		\toprule
		Method & Accuracy & RMA & IoU & RMA (disc-7) & IoU (disc-7) \\
		\midrule
		Deterministic baseline & $0.9774 \pm 0.0014$ & $0.6132 \pm 0.0034$ & $0.6514 \pm 0.0039$ & $0.1730 \pm 0.0073$ & $0.2565 \pm 0.0082$ \\
		Deep ensemble & $\mathbf{0.9813 \pm 0.0023}$ & $0.6012 \pm 0.0018$ & $\mathbf{0.6711 \pm 0.0043}$ & $0.1471 \pm 0.0038$ & $\mathbf{0.2985 \pm 0.0092}$ \\
		Laplace & $0.9771 \pm 0.0021$ & $0.6096 \pm 0.0019$ & $0.6518 \pm 0.0012$ & $0.1652 \pm 0.0041$ & $0.2573 \pm 0.0026$ \\
		MC Dropout & $0.9769 \pm 0.0022$ & $\mathbf{0.6183 \pm 0.0020}$ & $0.6493 \pm 0.0026$ & $\mathbf{0.1839 \pm 0.0042}$ & $0.2517 \pm 0.0057$ \\
		\bottomrule
	\end{tabular}
	}%
	\end{adjustwidth}
\end{table}

\begin{table}[H]
	\centering
	
	\caption{Comparison of XAI operators using deep ensemble and mean UA-RAO. Results are mean $\pm$ standard deviation across five test splits. The disc-7 columns report macro scores over the seven discriminative classes. \textbf{Bold} indicates the best value in each metric column.}
	\label{tab:xai_results}
	\small
	\resizebox{\linewidth}{!}{%
	\begin{tabular}{@{}lcccc@{}}
		\toprule
		XAI operator & RMA & IoU & RMA (disc-7) & IoU (disc-7) \\
		\midrule
		Grad-CAM & $0.7190 \pm 0.0030$ & $0.6232 \pm 0.0024$ & $0.3979 \pm 0.0064$ & $0.1961 \pm 0.0052$ \\
		LIME & $\mathbf{0.7476 \pm 0.0262}$ & $0.6365 \pm 0.0019$ & $\mathbf{0.4094 \pm 0.0320}$ & $0.2247 \pm 0.0040$ \\
		Occlusion & $0.6012 \pm 0.0018$ & $\mathbf{0.6716 \pm 0.0044}$ & $0.1472 \pm 0.0038$ & $\mathbf{0.2995 \pm 0.0093}$ \\
		\bottomrule
	\end{tabular}
	}%
	\par\medskip
	\begin{minipage}{\linewidth}
		\footnotesize\textit{Note:} The occlusion row is an independent re-evaluation of the deep-ensemble--occlusion--mean configuration in Tables~\ref{tab:bnn_results} and~\ref{tab:uarao_results}. The small numerical differences arise from variation between the separate frozen runs.
	\end{minipage}
\end{table}

\begin{table}[H]
	\centering
	
	\caption{Comparison of UA-RAO summaries using deep ensemble and occlusion. Results are mean $\pm$ standard deviation across five test splits. The disc-7 columns report macro scores over the seven discriminative classes. \textbf{Bold} indicates the best value in each metric column.}
	\label{tab:uarao_results}
	\small
	\resizebox{\linewidth}{!}{%
	\begin{tabular}{@{}lcccc@{}}
		\toprule
		UA-RAO summary & RMA & IoU & RMA (disc-7) & IoU (disc-7) \\
		\midrule
		Mean & $0.6012 \pm 0.0018$ & $\mathbf{0.6711 \pm 0.0043}$ & $0.1471 \pm 0.0038$ & $\mathbf{0.2985 \pm 0.0092}$ \\
		Variance & $0.5831 \pm 0.0018$ & $0.5795 \pm 0.0027$ & $0.1100 \pm 0.0039$ & $0.1024 \pm 0.0058$ \\
		Coeff. variation & $0.6084 \pm 0.0030$ & $0.5981 \pm 0.0049$ & $0.1632 \pm 0.0064$ & $0.1422 \pm 0.0104$ \\
		$5\%$ quantile & $\mathbf{0.6228 \pm 0.0042}$ & $0.6592 \pm 0.0029$ & $\mathbf{0.1930 \pm 0.0091}$ & $0.2727 \pm 0.0061$ \\
		$25\%$ quantile & $0.6178 \pm 0.0039$ & $0.6632 \pm 0.0016$ & $0.1824 \pm 0.0084$ & $0.2816 \pm 0.0034$ \\
		$50\%$ quantile & $0.6093 \pm 0.0033$ & $0.6553 \pm 0.0025$ & $0.1644 \pm 0.0071$ & $0.2646 \pm 0.0053$ \\
		$75\%$ quantile & $0.6044 \pm 0.0021$ & $0.6365 \pm 0.0043$ & $0.1538 \pm 0.0046$ & $0.2242 \pm 0.0092$ \\
		$95\%$ quantile & $0.5957 \pm 0.0021$ & $0.6487 \pm 0.0063$ & $0.1354 \pm 0.0046$ & $0.2505 \pm 0.0136$ \\
		\bottomrule
	\end{tabular}
	}%
\end{table}

Table~\ref{tab:bnn_results} compares deterministic occlusion with three model-distribution methods using the mean UA-RAO. Deep-ensemble mean occlusion increases disc-7 macro IoU from $0.2565$ to $0.2985$. The mean increase across the five matched test splits is $0.042$ (95\% confidence interval $0.035$--$0.049$), and all five splits show an increase. The result varies by class: IoU improves for impulsive transient, notch, sag, spike, and swell, but not for interruption or oscillatory transient. MC Dropout gives the highest RMA but a lower IoU, showing that relevance-mass concentration and spatial overlap measure different properties. This is an empirical comparison of the evaluated model distributions. Proposition~\ref{prop:bias} concerns explanation-distribution error, whereas IoU measures overlap with the physical disturbance mask; the proposition therefore does not determine the empirical ranking.

Table~\ref{tab:xai_results} compares the three attribution operators with the deep ensemble and mean UA-RAO fixed. Predictive accuracy is the same for all rows and is omitted. Occlusion gives the highest disc-7 IoU ($0.2995$), while LIME gives the highest disc-7 RMA ($0.4094$). This difference again reflects the distinction between relevance-mass concentration and top-$L$ spatial overlap. Grad-CAM performs well for interruption and notch, but its coarse temporal resolution is less suitable for several short events. LIME can concentrate relevance without matching the full event interval. The ranking applies to the architecture, dataset, target class, and implementation used in this study.

\begin{table}[!p]
	\begin{adjustwidth}{-1.5cm}{-1.5cm}
		\centering
		\captionsetup{font=small,skip=2pt}
		\setlength{\tabcolsep}{2pt}
		\renewcommand{\arraystretch}{0.72}
		\caption{Per-class IoU for deterministic and Bayesian occlusion explanations. \textbf{Bold} marks the unique row best.}
		\label{tab:perclass_bnn}
		\scriptsize
		\begin{tabular}{@{}lcccc@{}}
			\toprule
			Disturbance & Det. & Deep ens. & Laplace & MC Drop. \\
			\midrule
			Flicker & $0.997 \pm 0.000$ & $0.997 \pm 0.000$ & $0.997 \pm 0.000$ & $0.997 \pm 0.000$ \\
			Flicker+Harm. & $0.997 \pm 0.000$ & $\mathbf{0.998 \pm 0.000}$ & $0.997 \pm 0.000$ & $0.997 \pm 0.000$ \\
			Flicker+Sag & $0.997 \pm 0.000$ & $0.997 \pm 0.000$ & $0.997 \pm 0.000$ & $\mathbf{0.998 \pm 0.000}$ \\
			Flicker+Swell & $0.997 \pm 0.000$ & $0.997 \pm 0.000$ & $0.997 \pm 0.000$ & $0.997 \pm 0.000$ \\
			Harmonics & $0.997 \pm 0.000$ & $0.997 \pm 0.000$ & $0.997 \pm 0.000$ & $0.997 \pm 0.000$ \\
			Imp. trans. & $0.005 \pm 0.005$ & $\mathbf{0.104 \pm 0.018}$ & $0.006 \pm 0.006$ & $0.004 \pm 0.003$ \\
			Interruption & $0.112 \pm 0.010$ & $0.105 \pm 0.018$ & $0.141 \pm 0.004$ & $\mathbf{0.163 \pm 0.015}$ \\
			Interruption+Harm. & $0.997 \pm 0.000$ & $0.997 \pm 0.000$ & $0.997 \pm 0.000$ & $\mathbf{0.998 \pm 0.000}$ \\
			Notch & $0.010 \pm 0.004$ & $\mathbf{0.020 \pm 0.002}$ & $0.011 \pm 0.004$ & $0.007 \pm 0.002$ \\
			Osc. trans. & $0.654 \pm 0.011$ & $0.601 \pm 0.018$ & $\mathbf{0.665 \pm 0.016}$ & $0.635 \pm 0.007$ \\
			Sag & $0.291 \pm 0.037$ & $\mathbf{0.364 \pm 0.030}$ & $0.281 \pm 0.034$ & $0.259 \pm 0.026$ \\
			Sag+Harm. & $0.997 \pm 0.000$ & $0.998 \pm 0.000$ & $0.998 \pm 0.000$ & $0.998 \pm 0.000$ \\
			Spike & $0.106 \pm 0.015$ & $\mathbf{0.123 \pm 0.015}$ & $0.078 \pm 0.020$ & $0.034 \pm 0.004$ \\
			Swell & $0.618 \pm 0.014$ & $\mathbf{0.773 \pm 0.010}$ & $0.619 \pm 0.012$ & $0.659 \pm 0.011$ \\
			Swell+Harm. & $0.997 \pm 0.000$ & $0.997 \pm 0.000$ & $0.997 \pm 0.000$ & $0.997 \pm 0.000$ \\
			\bottomrule
		\end{tabular}
	\end{adjustwidth}
	\vspace{-1.0em}
\end{table}

\begin{table}[!p]
	\begin{adjustwidth}{-1.0cm}{-1.0cm}
		\centering
		\captionsetup{font=small,skip=2pt}
		\setlength{\tabcolsep}{2pt}
		\renewcommand{\arraystretch}{0.72}
		\caption{Per-class IoU for XAI operators with deep ensemble mean UA-RAO. \textbf{Bold} marks the unique row best.}
		\label{tab:perclass_xai}
		\scriptsize
		\begin{tabular}{@{}lccc@{}}
			\toprule
			Disturbance & Grad-CAM & LIME & Occlusion \\
			\midrule
			Flicker & $0.997 \pm 0.000$ & $0.997 \pm 0.000$ & $0.997 \pm 0.000$ \\
			Flicker+Harm. & $0.997 \pm 0.000$ & $0.997 \pm 0.000$ & $\mathbf{0.998 \pm 0.000}$ \\
			Flicker+Sag & $0.997 \pm 0.000$ & $0.997 \pm 0.000$ & $0.997 \pm 0.000$ \\
			Flicker+Swell & $0.997 \pm 0.000$ & $0.997 \pm 0.000$ & $0.997 \pm 0.000$ \\
			Harmonics & $0.997 \pm 0.000$ & $0.997 \pm 0.000$ & $0.997 \pm 0.000$ \\
			Imp. trans. & $0.101 \pm 0.022$ & $0.003 \pm 0.004$ & $\mathbf{0.108 \pm 0.024}$ \\
			Interruption & $\mathbf{0.246 \pm 0.015}$ & $0.129 \pm 0.014$ & $0.105 \pm 0.019$ \\
			Interruption+Harm. & $0.997 \pm 0.000$ & $0.997 \pm 0.000$ & $0.997 \pm 0.000$ \\
			Notch & $\mathbf{0.060 \pm 0.002}$ & $0.019 \pm 0.002$ & $0.020 \pm 0.002$ \\
			Osc. trans. & $0.329 \pm 0.013$ & $0.591 \pm 0.013$ & $\mathbf{0.601 \pm 0.018}$ \\
			Sag & $0.273 \pm 0.017$ & $0.318 \pm 0.013$ & $\mathbf{0.364 \pm 0.030}$ \\
			Sag+Harm. & $0.997 \pm 0.000$ & $0.997 \pm 0.000$ & $\mathbf{0.998 \pm 0.000}$ \\
			Spike & $0.055 \pm 0.001$ & $0.026 \pm 0.002$ & $\mathbf{0.126 \pm 0.017}$ \\
			Swell & $0.308 \pm 0.012$ & $0.486 \pm 0.028$ & $\mathbf{0.773 \pm 0.010}$ \\
			Swell+Harm. & $0.997 \pm 0.000$ & $0.997 \pm 0.000$ & $0.997 \pm 0.000$ \\
			\bottomrule
		\end{tabular}
	\end{adjustwidth}
	\vspace{-1.0em}
\end{table}

\begin{table}[!p]
	\begin{adjustwidth}{-2.7cm}{-2.7cm}
		\centering
		\captionsetup{font=small,skip=2pt}
		\setlength{\tabcolsep}{1.0pt}
		\renewcommand{\arraystretch}{0.72}
		\caption{Per-class IoU for UA-RAO summaries with deep ensemble occlusion. \textbf{Bold} marks the unique row best.}
		\label{tab:perclass_uarao}
		\scriptsize
		\begin{tabular}{@{}lcccccccc@{}}
			\toprule
			Disturbance & Mean & Var. & Coeff. var. & $5\%$ & $25\%$ & $50\%$ & $75\%$ & $95\%$ \\
			\midrule
			Flicker & $0.997 \pm 0.000$ & $0.997 \pm 0.000$ & $0.997 \pm 0.000$ & $0.997 \pm 0.000$ & $0.997 \pm 0.000$ & $0.997 \pm 0.000$ & $0.997 \pm 0.000$ & $0.997 \pm 0.000$ \\
			Flicker+Harm. & $0.998 \pm 0.000$ & $0.997 \pm 0.000$ & $0.997 \pm 0.000$ & $0.998 \pm 0.000$ & $0.997 \pm 0.000$ & $0.997 \pm 0.000$ & $0.997 \pm 0.000$ & $0.997 \pm 0.000$ \\
			Flicker+Sag & $0.997 \pm 0.000$ & $0.997 \pm 0.000$ & $0.997 \pm 0.000$ & $0.997 \pm 0.000$ & $0.997 \pm 0.000$ & $0.997 \pm 0.000$ & $0.997 \pm 0.000$ & $0.997 \pm 0.000$ \\
			Flicker+Swell & $0.997 \pm 0.000$ & $0.997 \pm 0.000$ & $0.997 \pm 0.000$ & $0.997 \pm 0.000$ & $0.997 \pm 0.000$ & $0.997 \pm 0.000$ & $0.997 \pm 0.000$ & $0.997 \pm 0.000$ \\
			Harmonics & $0.997 \pm 0.000$ & $0.997 \pm 0.000$ & $0.997 \pm 0.000$ & $\mathbf{0.998 \pm 0.000}$ & $0.997 \pm 0.000$ & $0.997 \pm 0.000$ & $0.997 \pm 0.000$ & $0.997 \pm 0.000$ \\
			Imp. trans. & $\mathbf{0.104 \pm 0.018}$ & $0.027 \pm 0.009$ & $0.036 \pm 0.009$ & $0.049 \pm 0.012$ & $0.063 \pm 0.020$ & $0.061 \pm 0.037$ & $0.051 \pm 0.016$ & $0.069 \pm 0.018$ \\
			Interruption & $0.105 \pm 0.018$ & $0.128 \pm 0.028$ & $\mathbf{0.332 \pm 0.059}$ & $0.104 \pm 0.016$ & $0.108 \pm 0.020$ & $0.108 \pm 0.015$ & $0.112 \pm 0.017$ & $0.123 \pm 0.026$ \\
			Interruption+Harm. & $0.997 \pm 0.000$ & $0.997 \pm 0.000$ & $0.997 \pm 0.000$ & $0.997 \pm 0.000$ & $0.997 \pm 0.000$ & $0.997 \pm 0.000$ & $0.997 \pm 0.000$ & $0.997 \pm 0.000$ \\
			Notch & $0.020 \pm 0.002$ & $0.034 \pm 0.004$ & $0.034 \pm 0.004$ & $0.009 \pm 0.003$ & $0.024 \pm 0.004$ & $0.035 \pm 0.006$ & $0.025 \pm 0.005$ & $\mathbf{0.039 \pm 0.009}$ \\
			Osc. trans. & $0.601 \pm 0.018$ & $0.098 \pm 0.013$ & $0.053 \pm 0.009$ & $\mathbf{0.638 \pm 0.014}$ & $0.613 \pm 0.018$ & $0.580 \pm 0.018$ & $0.547 \pm 0.027$ & $0.491 \pm 0.028$ \\
			Sag & $0.364 \pm 0.030$ & $0.287 \pm 0.009$ & $\mathbf{0.398 \pm 0.050}$ & $0.259 \pm 0.028$ & $0.263 \pm 0.029$ & $0.271 \pm 0.030$ & $0.290 \pm 0.025$ & $0.370 \pm 0.028$ \\
			Sag+Harm. & $0.998 \pm 0.000$ & $0.997 \pm 0.000$ & $0.997 \pm 0.000$ & $0.997 \pm 0.000$ & $0.997 \pm 0.000$ & $0.997 \pm 0.000$ & $0.998 \pm 0.000$ & $0.997 \pm 0.000$ \\
			Spike & $0.123 \pm 0.015$ & $0.028 \pm 0.008$ & $0.027 \pm 0.008$ & $0.084 \pm 0.015$ & $0.142 \pm 0.009$ & $0.192 \pm 0.025$ & $0.191 \pm 0.018$ & $\mathbf{0.213 \pm 0.021}$ \\
			Swell & $\mathbf{0.773 \pm 0.010}$ & $0.115 \pm 0.011$ & $0.115 \pm 0.011$ & $0.765 \pm 0.012$ & $0.759 \pm 0.014$ & $0.604 \pm 0.035$ & $0.353 \pm 0.012$ & $0.449 \pm 0.022$ \\
			Swell+Harm. & $0.997 \pm 0.000$ & $0.997 \pm 0.000$ & $0.997 \pm 0.000$ & $0.997 \pm 0.000$ & $0.997 \pm 0.000$ & $0.997 \pm 0.000$ & $0.997 \pm 0.000$ & $0.997 \pm 0.000$ \\
			\bottomrule
		\end{tabular}
	\end{adjustwidth}
\end{table}

Table~\ref{tab:uarao_results} compares UA-RAO summaries for deep-ensemble occlusion. The mean gives the highest disc-7 IoU ($0.2985$), while the $5\%$ quantile gives the highest RMA. A low quantile acts as a consensus filter. It retains relevance in the disturbance core, where ensemble members agree on the deviation from the nominal waveform, and reduces relevance near onset and recovery positions where they disagree. The remaining relevance is therefore more concentrated within the physical mask, which can increase RMA. However, top-$L$ selection always chooses the same number of positions. If true boundary positions are suppressed, lower-ranked positions outside the mask may be selected instead, which reduces IoU. Variance and coefficient of variation give weaker direct localisation results and are used as summaries of explanation dispersion. The mean averages member-wise explanations; for a nonlinear explainer, it does not generally equal the explanation of the ensemble-average prediction.
\subsection{Per-Class Localisation Analysis}
\label{sec:eval:perclass}
To examine whether the macro-level trends are consistent across disturbance types, Tables~\ref{tab:perclass_bnn}--\ref{tab:perclass_uarao} report per-class IoU scores. The values are computed as mean $\pm$ standard deviation of the class-wise IoU over the five test splits. IoU is used for this per-class comparison because it is applicable to deterministic explanations and real-valued UA-RAO summaries, and directly measures spatial overlap with the ground-truth disturbance interval.

The per-class results partition the disturbance types into distinct performance regimes. Long-duration and composite disturbances, including flicker, harmonics, and harmonic mixtures, achieve IoU scores near unity for nearly all point-valued explanations. These classes occupy a large portion of the waveform or contain persistent harmonic components, so the top-$L$ mask is relatively easy to align with the ground-truth disturbance interval. For these classes, $|\mathcal{I}(x)|\approx N$, so methods assigning relevance broadly across the waveform can obtain near-unity IoU by construction. These ceiling-effect classes therefore provide limited discrimination between methods. The disc-7 macro metrics in Tables~\ref{tab:bnn_results}--\ref{tab:uarao_results} therefore exclude the eight ceiling-effect classes and focus on the seven challenging events. As shown in Section~\ref{sec:eval:results}, the improvement from the deep ensemble is larger for disc-7 than for the all-class average.

Among the seven discriminative classes, the deep ensemble improves deterministic IoU for impulsive transient, notch, sag, spike, and swell, but gives lower IoU for interruption and oscillatory transient. IoU remains low for impulsive transient and notch, for which the 60-sample occlusion window is wide relative to the event length. Laplace and MC Dropout follow different class-specific patterns. The higher macro average therefore does not indicate better performance for every disturbance type.

The XAI comparison shows that occlusion is the most reliable operator for temporally localised events. It outperforms Grad-CAM and LIME on impulsive transient, oscillatory transient, sag, spike, and swell. Grad-CAM is strongest for interruption and notch, but its performance is weaker for sag, swell, and oscillatory transient. This supports the use of occlusion as the default attribution operator for PQD localisation, because it directly measures the effect of removing contiguous waveform segments.

The UA-RAO comparison shows how posterior summaries change localisation behaviour. The $5\%$ quantile improves oscillatory transient relative to the mean explanation, indicating that conservative posterior agreement can sharpen localisation for some transient patterns. Higher quantiles improve notch and spike, where broader posterior support can recover disturbance regions that are only intermittently emphasised.

Coefficient-of-variation maps can align visually with interruption or sag boundaries, but they describe disagreement rather than direct relevance. The finite-sample theorem and the localisation metrics address different questions: the theorem concerns bounded Lipschitz summaries of an explanation distribution, while RMA and IoU measure overlap with a physical disturbance region.

\subsection{Qualitative Evaluation on Synthetic Data}
\label{sec:qual_synthetic}
\begin{figure}[!htbp]
	\begin{adjustwidth}{-2cm}{-2cm}
		\centering
		\begin{subfigure}{0.4\linewidth}
			\centering
			\includegraphics[width=\linewidth]{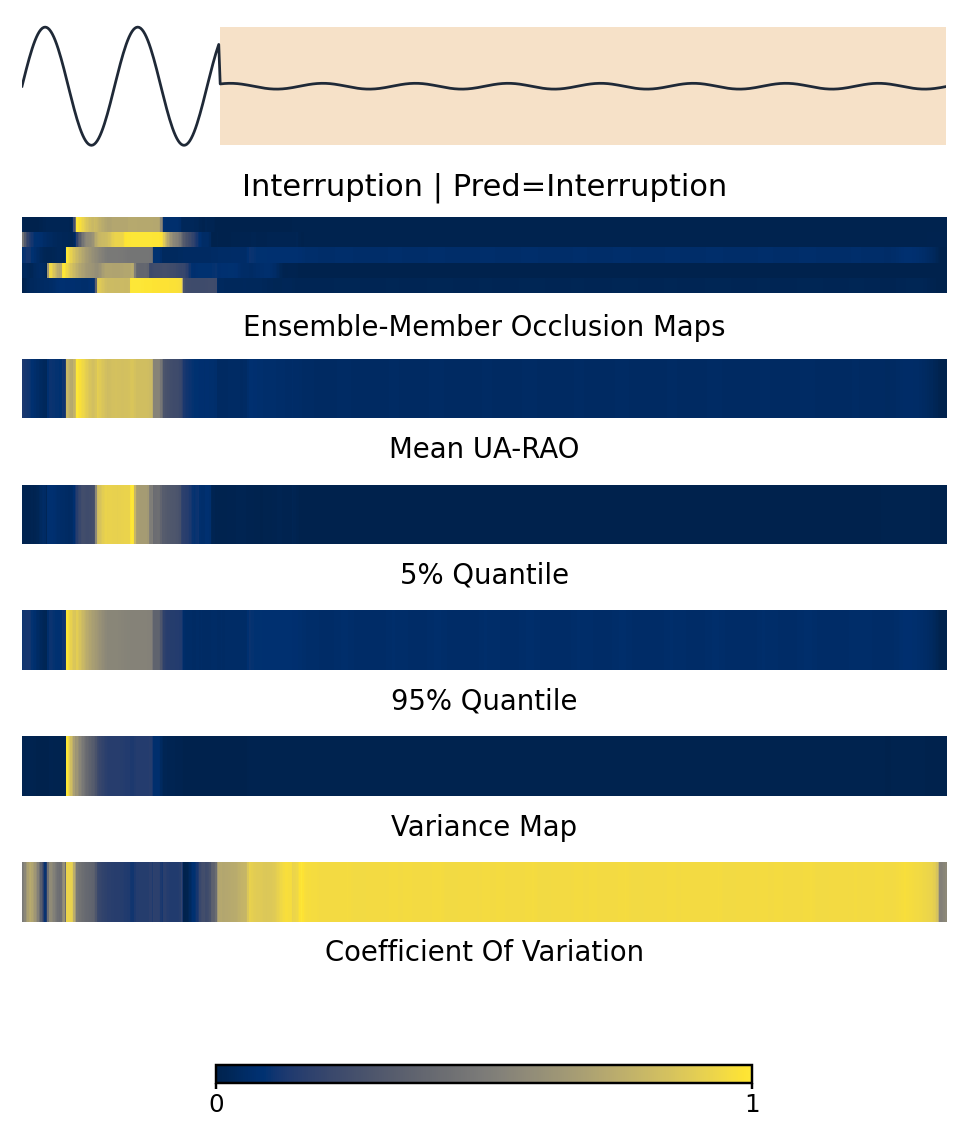}
			\caption{Interruption}
		\end{subfigure}
		\hfill
		\begin{subfigure}{0.4\linewidth}
			\centering
			\includegraphics[width=\linewidth]{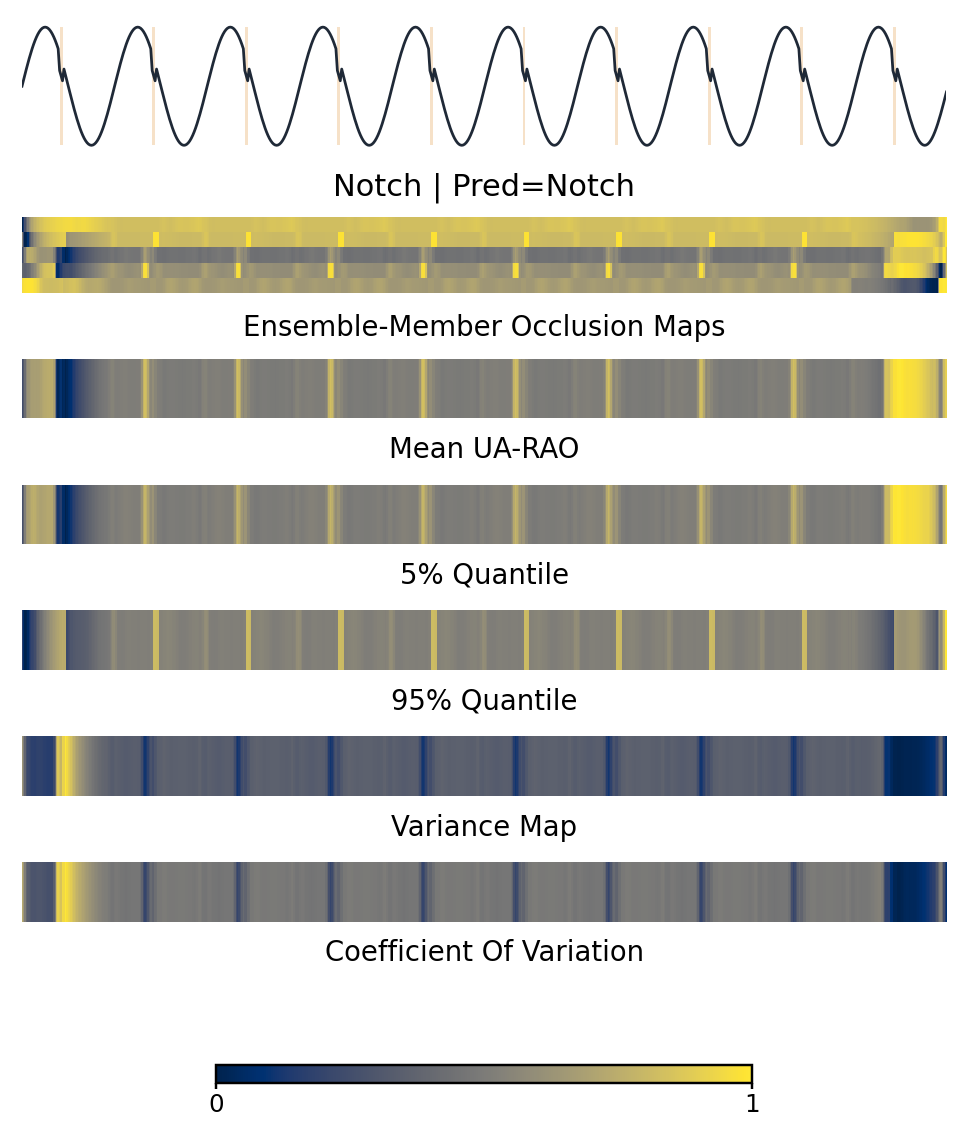}
			\caption{Notch}
		\end{subfigure}
		
		\vspace{0.3em}
		\begin{subfigure}{0.4\linewidth}
			\centering
			\includegraphics[width=\linewidth]{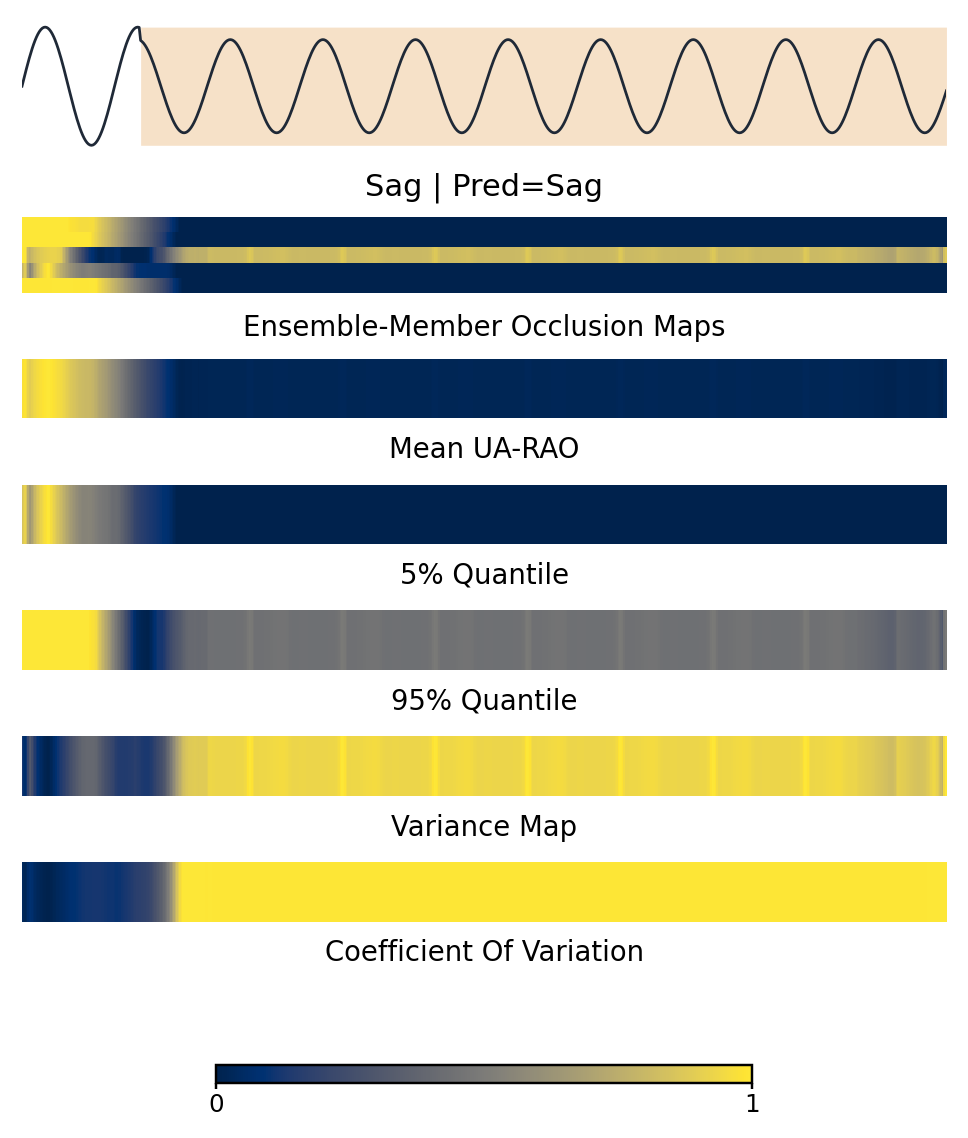}
			\caption{Sag}
		\end{subfigure}
		\hfill
		\begin{subfigure}{0.4\linewidth}
			\centering
			\includegraphics[width=\linewidth]{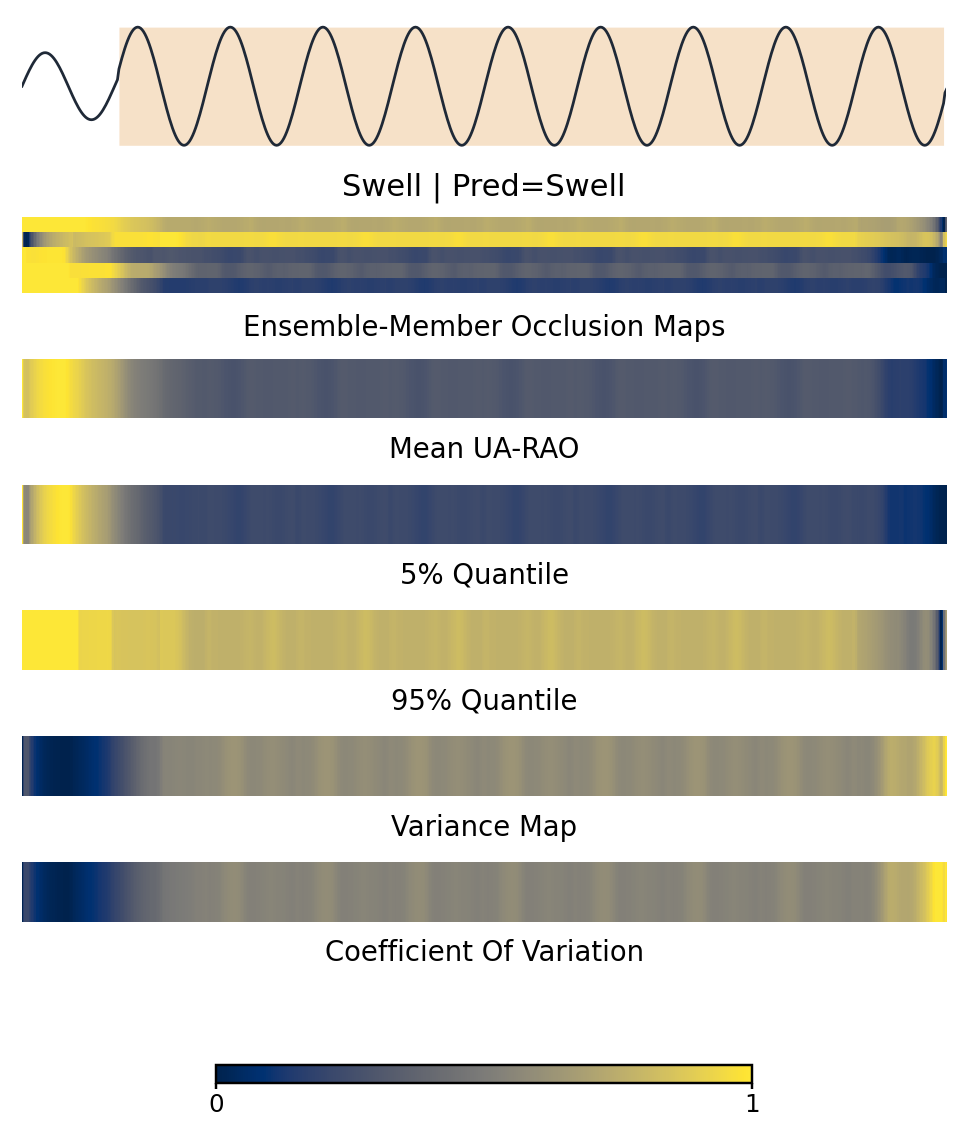}
			\caption{Swell}
		\end{subfigure}
		\caption{Qualitative synthetic examples illustrating the selected per-class UA-RAO behaviour reported in Table~\ref{tab:perclass_uarao}. Each panel shows the input waveform, ensemble-member occlusion maps, and real-valued UA-RAO summaries, including the mean explanation, quantile explanations, variance map, and coefficient of variation.}
		\label{fig:qual_synthetic}
	\end{adjustwidth}
\end{figure}
Figure~\ref{fig:qual_synthetic} provides instance-level visual evidence for the per-class UA-RAO trends summarised in Table~\ref{tab:perclass_uarao}. The four disturbance types are selected because they represent the most informative cases, excluding ceiling-effect classes, in the per-class evaluation. Together they span four qualitatively distinct uncertainty regimes. Interruption and sag illustrate cases where attribution dispersion, especially the coefficient of variation, is spatially meaningful and can align with disturbance transition regions. Notch represents a short and intermittent disturbance for which broader posterior support, particularly high-quantile summaries, can recover relevance missed by the mean explanation. Swell represents a distinctive amplitude disturbance for which the mean and low-quantile explanations remain concentrated on the true disturbed interval. These examples span ambiguous, localised, intermittent, and distinctive disturbance regimes and collectively illustrate the main per-class trends.

\begin{figure}[!htb]
	\begin{adjustwidth}{-2cm}{-2cm}
		\centering
		\begin{subfigure}{0.46\linewidth}
			\centering
			\includegraphics[width=\linewidth]{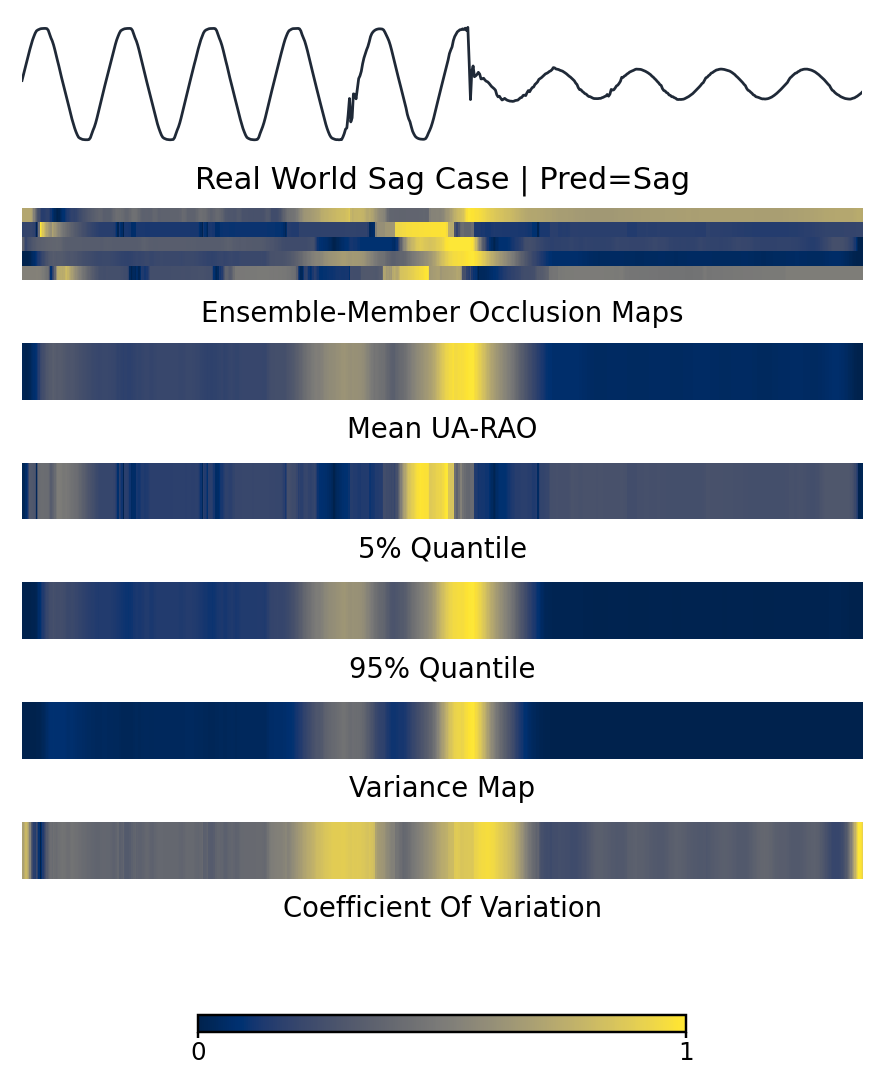}
			\caption{Real sag window 1}
		\end{subfigure}
		\hfill
		\begin{subfigure}{0.46\linewidth}
			\centering
			\includegraphics[width=\linewidth]{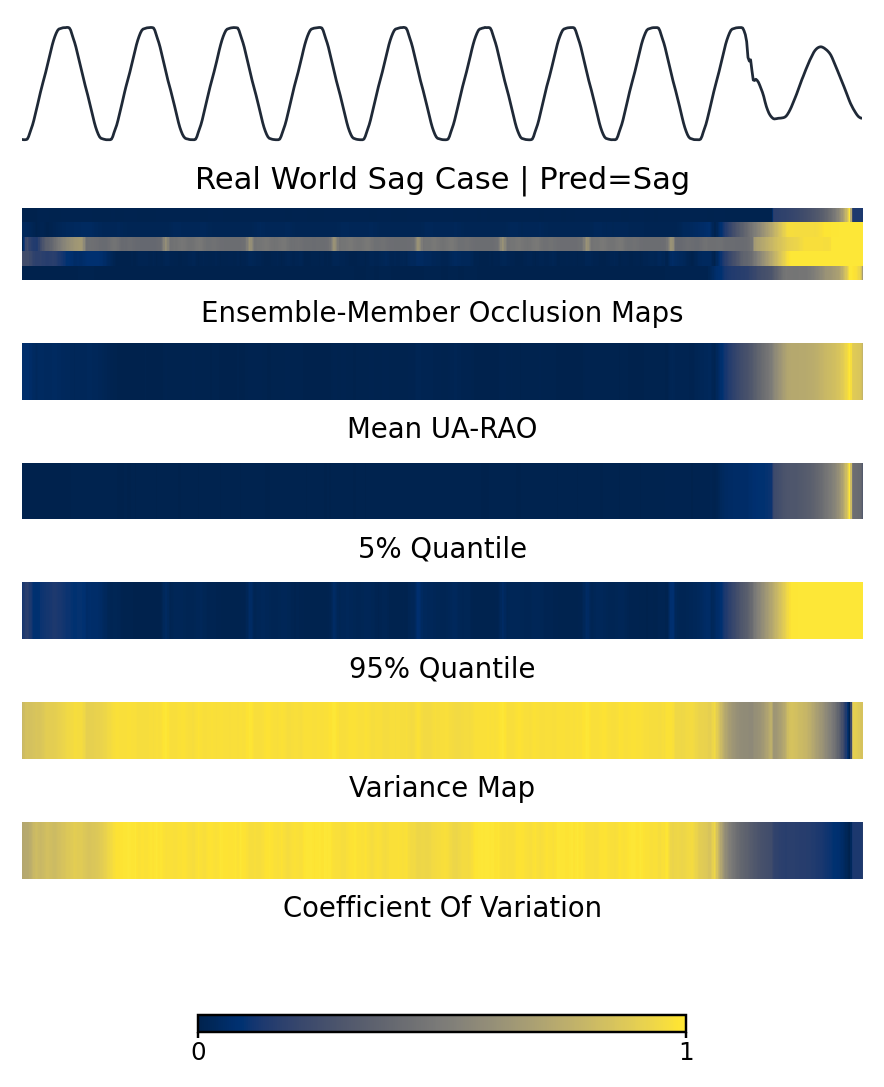}
			\caption{Real sag window 2}
		\end{subfigure}
		\caption{Qualitative real-world sag examples. Each panel shows the measured waveform, the five ensemble-member occlusion maps, and five UA-RAO summaries: the mean explanation, the $5\%$ and $95\%$ quantile explanations, the variance map, and the coefficient of variation. The two panels show how these UA-RAO summaries behave when the synthetic-trained classifier is applied to measured sag events.}
		\label{fig:qual_real}
	\end{adjustwidth}
\end{figure}

The interruption example in Figure~\ref{fig:qual_synthetic}a shows that posterior disagreement is concentrated near the disturbance transition region. This explains why the coefficient-of-variation summary performs well for interruption in Table~\ref{tab:perclass_uarao}: although coefficient of variation is not a direct relevance explanation, its high values indicate locations where ensemble members disagree about the temporal evidence for the interruption class. The notch example in Figure~\ref{fig:qual_synthetic}b illustrates the behaviour of higher quantiles for short and intermittent disturbances. The mean explanation is diffuse, whereas the $95\%$ quantile preserves several narrow high-relevance segments, matching the numerical observation that the high-quantile summary is strongest for notch. The sag example in Figure~\ref{fig:qual_synthetic}c shows a different uncertainty pattern: variance and coefficient-of-variation maps align with the sag interval and its boundary regions, supporting the interpretation that amplitude-transition regions produce elevated inter-member attribution variability. Finally, the swell example in Figure~\ref{fig:qual_synthetic}d shows that the mean and $5\%$ quantile remain concentrated over the visually disturbed interval, while more inclusive summaries become broader. This visually supports the finding that mean and low-quantile summaries are more suitable for distinctive swell events.
\subsection{Qualitative Evaluation on Real-World Data}
\label{sec:qual_real}
Figure~\ref{fig:qual_real} extends the analysis to two of these sag windows using the same deep-ensemble occlusion pipeline, examining whether the qualitative patterns observed in Table~\ref{tab:perclass_uarao} persist under real measurement conditions. The real-world data are drawn from the University of Cadiz power network recordings \cite{h2k88d-17}, which span five years of continuous measurements at a nominal frequency of 50~Hz sampled at 20~kHz, capturing diverse sag occurrences under operational conditions. For compatibility with the trained 1D-CNN, the recordings are resampled from 20~kHz to 3.2~kHz by polyphase filtering with a rational factor of $4/25$. A zero-phase finite-impulse-response (FIR) low-pass filter is applied before rate reduction to avoid aliasing. The voltage is then divided by a fixed nominal value of 300~V to match the per-unit amplitude range of the synthetic training data. Finally, the recordings are divided into 640-sample windows, corresponding to ten cycles at 50~Hz and matching the training input length. Ten-cycle windows containing clear sag events are selected manually and passed to the pre-trained model without fine-tuning. The two windows in Figure~\ref{fig:qual_real} come from different sag recordings. They illustrate a lower-confidence case with non-negligible probability for a competing class and a higher-confidence case with more concentrated relevance.

Both windows are classified as sag, but with different predictive confidence levels. The first exhibits lower sag probability with a competing oscillatory-transient probability, and its attribution maps emphasise a local deformation near the visible waveform change. The second shows higher sag confidence and more concentrated mean and quantile relevance near the visible sag-related segment. The examples show both individual ensemble explanations and UA-RAO summaries for measured inputs. The variance and coefficient-of-variation maps describe disagreement among ensemble members.

\subsection{Sensitivity of the Explanation Distribution to Input Distortions}
\label{sec:sensitivity}

We test local input sensitivity on $7{,}214$ waveforms from all 15 disturbance classes (all-15), including $3{,}366$ disc-7 cases. The deep-ensemble occlusion pipeline is evaluated under six controlled distortions: additive white Gaussian noise at 30 and 20~dB, amplitude scaling by factors $1.05$ and $0.95$, and circular shifts of $\pm4$ samples. Clean and distorted explanations are compared using four measures: correlation between their mean maps, IoU between their top-$L$ masks, mean coordinatewise $W_1$ between their five-member relevance distributions, and change in physical-mask IoU. The coordinatewise $W_1$ measure estimates $D_{\mathrm{coord}}$ in Corollary~\ref{cor:input}. For circular shifts, the explanations are aligned before comparison. This alignment is a coordinate permutation and therefore an $\ell_1$-isometry.

\begin{table}[H]
	
	\begin{adjustwidth}{-2.0cm}{-2.0cm}
	\footnotesize
	\centering
	\caption{Sensitivity of deep-ensemble occlusion explanations to input distortions (mean $\pm$ standard deviation across five test splits). Corr.\ is the Pearson correlation between clean and distorted mean maps; stability is the IoU between their top-$L$ masks; $W_1$ shift is the mean coordinatewise Wasserstein distance between the five-member relevance distributions; and $\Delta$IoU is the change in ground-truth localisation IoU.}
	\label{tab:sensitivity}
	\begin{tabular}{@{}llcccc@{}}
		\toprule
		Distortion & Scope & Corr. & Top-$L$ stability & $W_1$ shift & $\Delta$IoU \\
		\midrule
		Noise 30 dB & all-15 & $0.399 \pm 0.011$ & $0.660 \pm 0.008$ & $0.354 \pm 0.003$ & $-0.044 \pm 0.004$ \\
		 & disc-7 & $0.287 \pm 0.018$ & $0.274 \pm 0.009$ & $0.497 \pm 0.004$ & $-0.095 \pm 0.007$ \\
		Noise 20 dB & all-15 & $0.138 \pm 0.015$ & $0.631 \pm 0.006$ & $0.460 \pm 0.005$ & $-0.049 \pm 0.005$ \\
		 & disc-7 & $0.112 \pm 0.020$ & $0.212 \pm 0.006$ & $0.562 \pm 0.004$ & $-0.104 \pm 0.012$ \\
		Amplitude $\times 1.05$ & all-15 & $0.704 \pm 0.012$ & $0.762 \pm 0.005$ & $0.126 \pm 0.002$ & $+0.002 \pm 0.004$ \\
		 & disc-7 & $0.658 \pm 0.022$ & $0.492 \pm 0.010$ & $0.115 \pm 0.003$ & $+0.004 \pm 0.008$ \\
		Amplitude $\times 0.95$ & all-15 & $0.720 \pm 0.009$ & $0.772 \pm 0.008$ & $0.108 \pm 0.002$ & $+0.009 \pm 0.004$ \\
		 & disc-7 & $0.698 \pm 0.011$ & $0.514 \pm 0.011$ & $0.090 \pm 0.002$ & $+0.020 \pm 0.008$ \\
		Shift $+4$ samples & all-15 & $0.793 \pm 0.003$ & $0.859 \pm 0.004$ & $0.007 \pm 0.002$ & $-0.009 \pm 0.001$ \\
		 & disc-7 & $0.818 \pm 0.010$ & $0.701 \pm 0.007$ & $0.007 \pm 0.001$ & $-0.019 \pm 0.003$ \\
		Shift $-4$ samples & all-15 & $0.854 \pm 0.009$ & $0.872 \pm 0.005$ & $0.005 \pm 0.000$ & $-0.002 \pm 0.002$ \\
		 & disc-7 & $0.815 \pm 0.016$ & $0.729 \pm 0.010$ & $0.006 \pm 0.001$ & $-0.004 \pm 0.004$ \\
		\bottomrule
	\end{tabular}
	\end{adjustwidth}
\end{table}

Table~\ref{tab:sensitivity} shows a clear ordering across the distortions. After alignment, circular shifts give the smallest distributional changes ($W_1\leq0.007$). Amplitude scaling preserves physical-mask IoU on average, although disc-7 top-$L$ stability is $0.492$--$0.514$, indicating changes in the selected positions. Additive noise has the largest effect. At 20~dB, the disc-7 mean-map correlation falls to $0.112$, coordinatewise $W_1$ increases to $0.562$, and physical-mask IoU decreases by $0.104$. These results measure local sensitivity under the six tested distortions and correspond to the quantities in Corollary~\ref{cor:input}.

\subsection{Finite-Sample Self-Coverage of Explanation Bands}
\label{sec:calibration}

This experiment tests whether explanation bands have the expected finite-sample coverage for held-out draws generated by the same method. For MC Dropout and Laplace, 40 draws per instance are divided into 20 draws for constructing the bands and 20 for evaluating them. Reference coverage is computed from 100 random 20--20 partitions using the same interpolation rule as the implementation. For the deep ensemble, a min--max band is constructed from four members and evaluated on the remaining member in turn. Its leave-one-out (LOO) coverage is compared with the exchangeability reference $3/5$; with ties, a closed band has coverage of at least this value. Exchangeability is assumed from the common generation process rather than from the seed values themselves.

\begin{table}[H]
	
	\footnotesize
	\centering
	\caption{Finite-sample self-coverage of explanation bands. Reference values use the same band construction and exchangeable partitions as the implementation. Difference is empirical coverage minus reference coverage.}
	\label{tab:calibration}
	\begin{tabular}{@{}lcccc@{}}
		\toprule
		Method & Band level & Empirical & Reference & Difference \\
		\midrule
		MC Dropout & 0.50 & 0.455 & 0.455 & $+0.000$ \\
		 & 0.80 & 0.730 & 0.730 & $-0.000$ \\
		 & 0.90 & 0.821 & 0.821 & $-0.001$ \\
		Laplace & 0.50 & 0.446 & 0.470 & $-0.024$ \\
		 & 0.80 & 0.725 & 0.741 & $-0.016$ \\
		 & 0.90 & 0.817 & 0.831 & $-0.014$ \\
		Deep ensemble (LOO min--max) & 0.60 & 0.601 & 0.600 & $+0.001$ \\
		\bottomrule
	\end{tabular}
\end{table}

MC Dropout closely matches its finite-sample reference at all three band levels. Deep-ensemble LOO coverage is also close to its reference ($0.601$ versus $0.600$). Laplace coverage is lower than its reference by $0.014$--$0.024$. These results describe coverage for held-out draws produced by the same method; they do not compare the bands with a true posterior explanation distribution.

\section{Conclusion}
\label{sec:conclusion}

This paper defines explanation distributions for measurable deterministic and stochastic attribution operators and summarises them using the UA-RAO. The theory separates approximation bias from finite-sample error in explanation space, includes a local-stability bound with a boundary-crossing term, separates model variability from stochastic-explainer variability, and gives results for aligned input sensitivity and finite-sample explanation bands.

On the evaluated PQD benchmark, deep-ensemble mean occlusion improves average disc-7 IoU over deterministic occlusion by $0.042$ across paired splits. The improvement depends on the disturbance type and does not occur for interruption or oscillatory transient. Among the tested operators, occlusion gives the highest disc-7 IoU, while LIME gives the highest RMA. Among the UA-RAO summaries, the mean gives the highest IoU. The explanation bands achieve approximately their expected finite-sample coverage for held-out draws generated by the same method. Under the six tested input distortions, the explanations change least after aligned temporal shifts, moderately after amplitude scaling, and most under additive noise.

Future work should test more complex architectures and attribution operators, develop external criteria for evaluating explanation uncertainty, and study a wider range of input perturbations. It also remains to be tested whether the observed patterns hold for larger, heavily overparameterised models with more complex posterior distributions. The same push-forward framework can be used for this purpose. Efficient model-distribution methods such as linearised Laplace approximations \cite{immer2021improving,daxberger2021laplace} may support real-time monitoring.

\bibliographystyle{elsarticle-num}
\bibliography{refer.bib}
\end{document}